\documentclass{article} 
\usepackage{iclr2027_conference,times}

\usepackage{amsmath,amsfonts,bm}

\def\eqref#1{equation~\ref{#1}}

\def\1{\bm{1}}

\DeclareMathAlphabet{\mathsfit}{\encodingdefault}{\sfdefault}{m}{sl}
\SetMathAlphabet{\mathsfit}{bold}{\encodingdefault}{\sfdefault}{bx}{n}

\usepackage{float}
\usepackage{hyperref}
\usepackage{booktabs}
\usepackage{url}
\usepackage{wrapfig} 
\usepackage{graphicx}
\usepackage{amssymb} 
\usepackage{enumitem}
\usepackage{amsthm}
\usepackage{comment}

\newtheorem{theorem}{Theorem}

\title{Learning continuous patient trajectories from
electronic health records}
\hypersetup{hidelinks}

\usepackage{titlesec}
\newcommand{\AppendixBanner}{%
  \titleformat{\section}
    {\large\bfseries}
    {\thesection}{0.6em}{}
  \noindent\rule{\textwidth}{2pt}\vspace{2mm}
  \begin{center}
    {\LARGE\bfseries SUPPLEMENTARY INFORMATION}
  \end{center}
  \noindent\rule{\textwidth}{1pt}\vspace{2mm}
}

\fancypagestyle{firstpage}{
  \fancyhead{}
  \fancyhead[L]{Preprint.}
  \renewcommand{\headrulewidth}{1pt}
}

\fancypagestyle{mainpages}{
  \fancyhead{}
  \fancyhead[C]{Learning continuous patient trajectories from electronic health records}
  \renewcommand{\headrulewidth}{1pt}
}

\author{
\normalfont
\begin{tabular}{@{}l@{}}
\textbf{Silas Ruhrberg Estévez}$^{1,2}$\thanks{Corresponding author: \texttt{sr933@cam.ac.uk}}
\quad
\textbf{Kara Liu}$^{2}$
\quad
\textbf{Christopher Chiu}$^{1}$
\quad
\textbf{Benjamin Atta Owusu}$^{3}$
\\[2pt]
\textbf{Umesh Kadam}$^{3}$
\quad
\textbf{Russ B. Altman}$^{2}$\thanks{Equal co-advising}
\quad
\textbf{Mihaela van der Schaar}$^{1}$\footnotemark[2]
\end{tabular}
\\[10pt]
\hspace{1pt}
$^{1}$University of Cambridge
\quad
$^{2}$Stanford University
\quad
$^{3}$University of Exeter
}

\begin{document}

\iclrfinaltrue

\pagestyle{mainpages}
\maketitle
\thispagestyle{firstpage}

\begin{abstract}
Electronic health records provide irregular observations of latent patient states that evolve continuously over time. Recent autoregressive models condition on clinical histories to forecast future events as sequences of discrete observations. Conversely, multi-marginal flow matching provides a continuous-time formulation, but using multiple observations to supervise training paths does not itself give the learned dynamics access to preceding patient history. We introduce \texttt{EHRFlow}, a multi-marginal flow-matching framework that conditions on encoded patient history, thereby allowing future dynamics to depend on the patient's prior clinical trajectory. Our proposed framework accommodates irregular observation times and supports forecasting at arbitrary horizons. Across controlled synthetic benchmarks, \texttt{EHRFlow} improves clinical-code forecasting and latent-state recovery. On real-world clinical datasets comprising more than one million patients, including an independent external validation cohort, \texttt{EHRFlow} improves horizon-averaged top-5 clinical-code accuracy over autoregressive and history-independent flow-matching baselines. Finally, in a controlled counterfactual simulation, conditional guidance approximates the known effect of an antihypertensive intervention without training a task-specific outcome model.
\end{abstract}

\section{Introduction}

Forecasting the future health trajectories of individual patients would enable more personalised care by anticipating future patient states and medically relevant events. Recent advancements in autoregressive and generative machine learning have demonstrated the ability to learn from patient medical history using electronic health records (EHRs) to generate sequences of future diagnoses, treatments, and measurements \citep{Choi2016Doctor,Zhang2026Apollo,Shmatko2025}. 

\begin{wrapfigure}{r}{0.5\textwidth}
\vspace{-15pt}
\centering
\includegraphics[width=\linewidth]{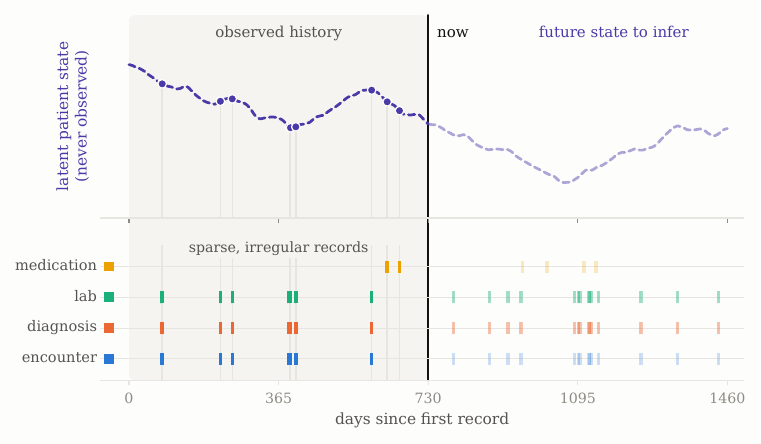}
\vspace{-25pt}
\caption{Schematic of an unobserved, continuously evolving patient state (top) and sparse EHR observations (bottom).}
\vspace{-10pt}
\label{fig:intro}
\end{wrapfigure}

Despite these developments, trajectory forecasting remains an unresolved problem in clinical machine learning \citep{Goldstein2016}. Modeling patient trajectories is highly challenging as the data are longitudinal, high-dimensional, mixed-type, and irregularly sampled. Furthermore, patients differ in observation schedules and rates of clinical progression, without a common temporal reference point. Additionally, forecasting future patient states requires knowledge of their clinical history, as patients in similar current states may have different futures depending on their preceding disease course, treatment exposure, or rate of progression \citep{Swain2003,Coresh2014}. EHRs provide only sparse, partial observations of a patient state that continues to evolve between records. This motivates modeling continuous latent trajectories conditioned on clinical history to infer future states (Figure~\ref{fig:intro}).

Several existing works have tried to address these challenges. Autoregressive models condition on patient history but predominantly represent clinical data as discrete events \citep{Shmatko2025,Waxler2025,Kraljevic2024}. As a result, these approaches do not explicitly represent the continuously evolving patient state between recorded events, and forecasting at a distant horizon typically requires sequential rollout generation, which may accumulate prediction errors. Conversely, continuous-time models capture evolving patient dynamics. Neural ordinary differential equations (ODEs) parameterise dynamics with a neural vector field \citep{Chen2018neural}, and continuous normalizing flows use these dynamics to transport probability distributions over time \citep{Grathwohl2018}.

Flow matching proposes learning the vector field by regressing directly onto target velocities along prescribed probability paths \citep{lipman2023flow}, avoiding numerical integration during training. However, flow matching methods typically rely on paired samples to construct a probability path and require specifying how observations across time are coupled. For instance, in many biological settings, observations are collected as unpaired population snapshots along a shared experimental timeline, thus requiring trajectory reconstruction such as through optimal transport \citep{Schiebinger2019,Tong2020TrajectoryNet}. EHR data, meanwhile, provide the opportunity to leverage multiple observations from the same patient when learning these dynamics. Recent multi-marginal and stream-level flow-matching approaches use multiple observed states to construct training paths \citep{rohbeck2025modeling,wei2025streamlevel}, with applications to longitudinal medical imaging \citep{Islam2025}. However, in these works, learned vector fields are parameterised by only the current patient state and temporal information, without access to the patient's preceding history. Related work by \citet{zhang2024trajectory} demonstrated that conditioning the vector field on the most recent observations can improve performance. However, their work focused on low-dimensional clinical time series data and formulated the target flow using pairs rather than multiple trajectory observations, as in multi-marginal approaches.

We introduce \texttt{EHRFlow}, a history-conditioned multi-marginal flow-matching framework for continuous latent-state forecasting from longitudinal and irregularly sampled EHRs. Our framework uses multiple observations from each patient to supervise velocity during training, and conditions the vector field on encoded patient history and the sampled calendar duration, thus preserving relative observation times. At inference, the field is integrated from the encoded anchor state, conditioned on the preceding history, to forecast an arbitrary horizon.

\textbf{Contributions.} Our core contributions are threefold:
\vspace{-0.5em}
\begin{enumerate}[leftmargin=*,itemsep=0.4ex,parsep=-0.1ex,topsep=0.25ex]
\item \textbf{Conceptually}, we formulate longitudinal EHR forecasting as learning continuous latent patient trajectories and provide a theoretical illustration of why matching population marginals alone does not necessarily recover individual trajectories.
\item \textbf{Methodologically}, we introduce \texttt{EHRFlow}, extending multi-marginal flow matching with explicit history conditioning and a temporal formulation that accommodates irregular observations and arbitrary forecast horizons.
\item \textbf{Empirically}, we demonstrate across both synthetic benchmarks and real-world EHR datasets, comprising more than one million patients and including an independent external validation cohort, that our method improves forecasting performance over autoregressive and continuous-time baselines. We further recover a known treatment effect through intervention-conditioned forecasting in a controlled simulation.
\end{enumerate}

\section{Related Work}

\textbf{Sequence models for clinical data.}
Early RNN-based approaches such as RETAIN \citep{Choi2016Retain} and DoctorAI \citep{Choi2016Doctor} used patient histories to predict clinical outcomes. Transformers enabled richer longitudinal representations, including BEHRT \citep{Li2020}, MOTOR \citep{steinberg2024motor}, and APOLLO \citep{Zhang2026Apollo}. Foresight \citep{Kraljevic2024}, ETHOS \citep{Renc2024}, DELPHI-2M \citep{Shmatko2025}, and COMET \citep{Waxler2025} extend next-token prediction to generate clinical timelines, while SurvivEHR \citep{Gadd2026} uses a competing-risks framework. These models accommodate irregular timing through temporal encodings or time-to-event predictions, but generally represent evolution through discrete events. \texttt{EHRFlow} instead uses longitudinal context to condition a continuously evolving latent patient state.

\textbf{Continuous-time and state-space models.}
Neural ODEs \citep{Chen2018neural} and Latent ODEs \citep{Rubanova2019} learn continuous dynamics, while Neural CDEs \citep{Kidger2020}, GRU-ODE-Bayes \citep{DeBrouwer2019} and ContiFormer \citep{chen2023contiformer} incorporate irregular observations into evolving hidden states. For models trained through numerical integration, learning from long records across large patient populations can be computationally demanding. Attentive state-space models capture dependence on preceding latent states \citep{Alaa2019attentive}, while TrajSurv learns continuous latent trajectories for survival prediction \citep{Agrawal2025}. \texttt{EHRFlow} instead learns from directly supervised velocities along paths constructed from multiple observations, avoiding trajectory integration during flow training.

\begin{figure}[t]
    \centering
    \vspace{-5pt}\includegraphics[width=\linewidth]{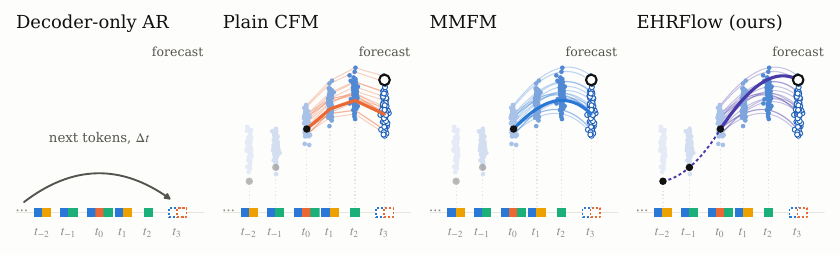}
    \vspace{-25pt}
    \caption{
    Autoregressive models condition on the observed sequence but predict future
    events discretely. Conditional flow matching (CFM) models have pairwise latent
    transitions. Multi-marginal flow matching (MMFM) uses multiple observations
    to define the training trajectory, and \texttt{EHRFlow} additionally conditions the
    learned vector field on the preceding patient history.
    }
    \label{fig:ehrflow_overview}
    \vspace{-12pt}
\end{figure}

\textbf{Flow matching for continuous dynamics.}
Flow matching learns continuous vector fields through simulation-free regression onto conditional velocity targets \citep{lipman2023flow,albergo2023building}, with extensions incorporating data geometry \citep{kapusniak2024metric} and applications to probabilistic forecasting \citep{kollovieh2025flow}. Trajectory Flow Matching \citep{zhang2024trajectory} conditions on only the most recent observations for clinical time-series modelling, using pairwise flow supervision. Multi-marginal \citep{rohbeck2025modeling} and Gaussian-process \citep{wei2025streamlevel} formulations instead use multiple observations to supervise training paths, but do not condition the learned field on the preceding patient history. \texttt{EHRFlow} builds on these multi-observation path constructions by explicitly conditioning the vector field on the patient history available at the forecast anchor. Its latent formulation supports forecasts of both continuous and discrete clinical tokens at arbitrary horizons. Future observations supervise the training path, while preceding observations provide inference-time context (see Figure~\ref{fig:ehrflow_overview}).
\section{Problem formulation and preliminaries}

\subsection{Observed and latent patient states}

We assume that the clinical state of a patient evolves continuously over time and that EHRs provide sparse, irregular, and incomplete observations of this underlying process. For patient $i$, we observe a time-stamped sequence
\begin{equation}
    \mathcal{X}_i
    =
    \{(\tau_{i,j},x_{i,j})\}_{j=1}^{m_i},
    \qquad
    \tau_{i,1}<\cdots<\tau_{i,m_i},
\end{equation}
where $x_{i,j}\in\mathbb{R}^{d_x}$ denotes the feature vector constructed from patient $i$'s recorded clinical information at time $\tau_{i,j}$. Each feature vector aggregates all categorical event indicators, continuous-valued measurements, and other observation indicators possible to observe at a single time point. We encode each observation in the sequence into a lower-dimensional continuous-valued latent representation,
\begin{equation}
    z_{i,j}=E_\psi(x_{i,j})\in\mathbb{R}^{d_z},
    \qquad
    \mathcal{Z}_i
    =
    \{(\tau_{i,j},z_{i,j})\}_{j=1}^{m_i},
\end{equation}
where $\mathcal{Z}_i$ is the corresponding time-stamped latent sequence. A decoder $D_\phi$ maps latent states to predictions in the observation space.

For patient $i$, let $a\in\{1,\ldots,m_i\}$ index an anchor observation $x_{i,a}$ at time $\tau_{i,a}$. We denote the patient history available at and before the anchor as
\(
    \mathcal{H}_{i,a}
    =
    \{(\tau_{i,j},x_{i,j})\}_{j=1}^{a}.
\)
Given a forecast horizon $\Delta>0$, our objective is to predict the future observed patient state at time $\tau_{i,a}+\Delta$, using the encoded anchor state $z_{i,a}$ and the observed history $\mathcal{H}_{i,a}$. We model this trajectory evolution in latent space using a neural vector field, which specifies the instantaneous rate of change of the latent state. Integrating the field produces a latent forecast, which is then decoded into predictions of EHR observations. For notational simplicity, we now focus on a generic patient trajectory and omit the patient index $i$.

\subsection{Trajectory modeling with conditional flow matching}
We first consider two encoded observations $z_{s_0}$ and $z_{s_1}$, which are sampled from the same patient at calendar time $\tau_{s_0},\tau_{s_1}$ and separated by $\Delta = \tau_{s_1}-\tau_{s_0}$ calendar days. Conditional flow matching \citep{lipman2023flow,albergo2023building} constructs a conditional probability path between them over normalised flow time $s\in[0,1]$. A common construction is
\begin{equation}
    z_s
    =\mu_s(z_{s_0},z_{s_1})+\sigma_s\epsilon,
    \qquad
    \epsilon\sim\mathcal{N}(0,I),
    \qquad
    s\sim\mathcal{U}(0,1),
\end{equation}
where $\mu_s$ and $\sigma_s$ specify the mean interpolation and noise scale. Holding the sampled $\epsilon$ fixed, differentiation with respect to $s$ gives the conditional velocity
\begin{equation}
    u_s
    =
    \frac{\partial\mu_s(z_{s_0},z_{s_1})}{\partial s}
    +
    \frac{\partial\sigma_s}{\partial s}\epsilon.
\end{equation}
For example, the linear interpolant
$\mu_s(z_{s_0},z_{s_1})=(1-s)\cdot z_{s_0}+s\cdot z_{s_1}$ with $\sigma_s=0$ yields
$u_s=z_{s_1}-z_{s_0}$. To accommodate observation pairs separated by different calendar durations, we condition the vector field on $\lambda=\log(1+\Delta)$ and train it to predict the conditional velocity:
\begin{equation}
    \mathcal{L}_{\mathrm{CFM}}
    =
    \mathbb{E}_{(z_{s_0},z_{s_1},\Delta)\sim\pi,s,\epsilon}
    \left[
        \left\|
            v_\theta(z_s,s,\lambda)-u_s
        \right\|_2^2
    \right],
\end{equation}
where $\pi$ is the empirical distribution over all pairs of consecutive observations belonging to the same patient, across all patients. Because longitudinal records provide a natural correspondence between observations, no optimal-transport coupling between patients is required.

Note that normalised flow time $s$ denotes relative position within an observation window, which maps to calendar time $\tau : = \tau_{s_0} + s\cdot\Delta$. The target $u_s$ is a velocity with respect to normalised flow time. For an interpolated $z$, we have
$\mathrm{d}z/\mathrm{d}s =
    \Delta \mathrm{d}z/\mathrm{d}\tau$,
thereby motivating the need to condition the vector field on calendar distance $\Delta$ to distinguish different window durations with respect to calendar days. 

\subsection{Trajectory modeling with multi-marginal flow matching}

Pairwise conditional flow matching constructs a training path using only two observed patient states. However, EHRs contain many intermediate observations that provide necessary information about the underlying clinical dynamics. We therefore adapt multi-marginal flow matching \citep{rohbeck2025modeling} to construct the training path jointly from multiple, irregularly sampled observations. Let us consider $K\geq2$ encoded observations and calendar times from a single patient's trajectory starting at $\tau_j$ as $\{(\tau_k,z_k)\}_{k=j}^{K+j-1}$, with $\tau_j<\cdots<\tau_{K+j-1}$.
Let $\Delta =\tau_{K+j-1}-\tau_{j}$ be the window's span of total calendar days and $\lambda=\log(1+\Delta)$.
We first normalise their time-stamps into normalised flow time:
\begin{equation}
    t_k
    =
    \frac{\tau_k-\tau_j}{\Delta},
    \qquad
    0=t_j<\cdots<t_{K+j-1}=1. 
\end{equation}

Let $\mathcal{W}=\{(t_k,z_k)\}_{k=j}^{K+j-1}$ denote the sampled window. 
To accommodate both deterministic and stochastic constructions of interpolants connecting the observed states, we write
\begin{equation}
    z_s=I_s(\mathcal{W},\xi),
    \qquad
    u_s=\frac{\partial I_s(\mathcal{W},\xi)}{\partial s},
\end{equation}
where $\xi$ denotes the sampled path randomness and is held fixed when differentiating, and the interpolant must pass through the observed states, i.e.,
$I_{t_k}(\mathcal{W},\xi)=z_k$, for $k=j,\ldots,K+j-1$. 
This yields the multi-marginal flow-matching objective
\begin{equation}
    \mathcal{L}_{\mathrm{MMFM}}
    =
    \mathbb{E}_{(\mathcal{W},\Delta) \sim \pi,s,\xi}
    \left[
        \left\|
            v_\theta(z_s,s,\lambda)-u_s
        \right\|_2^2
    \right],
\end{equation}
where $\pi$ denotes the empirical distribution over windows pooled across patients, each patient contributing up to a fixed number of windows drawn uniformly from its time-window candidates.  Note that this formulation is agnostic to the interpolant family.

\section{\texttt{EHRFlow}}

Although multi-marginal formulations incorporate multiple trajectory observations into training paths, this supervision does not itself provide the learned vector field with access to patient history \citep{rohbeck2025modeling}. Related work conditions flow dynamics on recent measurements \citep{zhang2024trajectory}, but models only pairwise transitions in observation space and conditions on the most recent observations rather than the broader longitudinal history.

\paragraph{History conditioning.} To address this gap, we consider the patient's observed history $\mathcal{H}_{a} =\{(\tau_{l},x_{l})\}_{l=1}^{a}$, where $a$ indexes the first observation of the sampled window $\mathcal{W}$, so that $\tau_a=\tau_j$. Thus, the history $\mathcal{H}_a$ includes only observations up to and including the anchored observation. We propose to condition the vector field on a fixed-dimensional representation of the patient's anchored history:
\begin{equation}
    h_{a}
    =
    g_\omega(\mathcal{H}_{a})
    \in\mathbb{R}^{d_h},
\end{equation}
where $g_\omega$ is a learned history encoder. During flow training, $g_\omega$ is frozen and $h_a$ is held fixed along the entire path, independently of $s$.

 \begin{wrapfigure}{r}{0.45\textwidth}
     \centering
     \vspace{-12pt}
     \includegraphics[width=\linewidth]{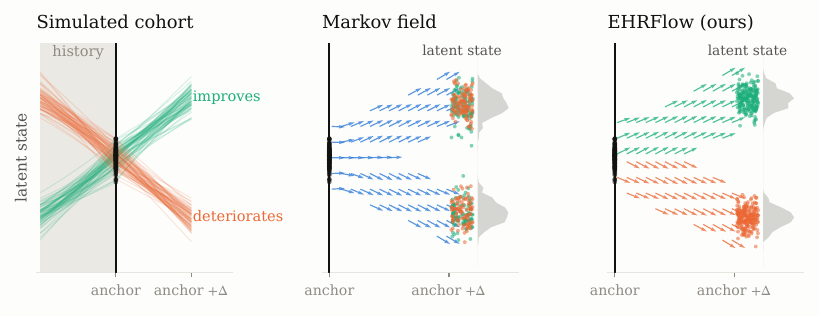}
     \vspace{-20pt}
     \caption{
     Patients with similar current states may have
     different future trajectories depending on how they reached those states.
     }
     \label{fig:history_matters}
     \vspace{-2pt}
 \end{wrapfigure}

We provide theoretical motivation for history-conditioning in Appendix~\ref{app:theory}. First, we show that conditioning on history cannot increase the optimal flow-matching loss, and that the improvement is strict whenever history changes the expected velocity given the current state (Theorem~1).
Second, in a linear-Gaussian example with noisy history, both history-independent and history-conditioned fields recover the correct population marginals, but history-conditioning yields strictly lower individual forecast error. This error decreases as the history becomes more informative and vanishes in the noiseless limit (Theorem~2). 
Figure~\ref{fig:history_matters} illustrates Theorem~2, showing how history conditioning distinguishes trajectories that share the same current state.

\paragraph{Optimisation.}
We pretrain the encoder and decoder using a modified variational autoencoder
objective \citep{Kingma2013}, where the reconstruction loss is adapted for both
categorical and continuous variables. We separately pretrain the history
encoder $g_\omega$ using a GRU \citep{chung2014empirical} to predict the clinical
event recorded at the patient's next observation from the preceding $K$ observations. During flow training, the observation encoder $E_\psi$ and history encoder $g_\omega$ are frozen, while the vector field $v_\theta$ and decoder $D_\phi$ are trained jointly. Training windows begin at a randomly sampled anchor observation and end at the first observation at least $M$ days later, where $M$ is drawn from a fixed set of forecast horizons. We minimise
\begin{equation}
\begin{aligned}
    \mathcal{L}_{\texttt{EHRFlow}}
    =
    \mathbb{E}_{(\mathcal{W},\Delta,\mathcal{H}_a)\sim\pi,\,s,\,\xi}
    \Big[
        &\left\|v_\theta(z_s,s,\lambda,h_a)-u_s\right\|_2^2 \\
        &+\ell_{\mathrm{rec}}\!\left(
            D_\phi\!\left(z_s+(1-s)\cdot v_\theta(z_s,s,\lambda,h_a)\right), \,
            x_{K+j-1}
        \right)
    \Big],
\end{aligned}
\end{equation}
where $x_{K+j-1}$ is the final observation of the window $\mathcal{W}$ and $\hat{z}_1 = z_s + (1-s)\cdot v_\theta(z_s,s,\lambda,h_a)$ is a single-step approximation to the latent endpoint. The reconstruction loss $\ell_{\mathrm{rec}}$ combines cross-entropy for categorical variables and squared error for continuous variables, and helps supervise the vector field and decoder through this endpoint surrogate without numerical integration.

\paragraph{Inference.} At inference, no future observations are required. Given a patient's observable history $\mathcal{H}_{a}$ up until the most recent observation $x_a$ and the requested forecast horizon $\Delta$ in calendar days, we compute the encoded history representation $h_a$ and then integrate the learned history-conditioned vector field over the normalised flow time $s\in[0,1]$:
\begin{equation}
    \frac{\mathrm{d}\hat{z}_s}{\mathrm{d}s}
    =
    v_\theta\!\left(
        \hat{z}_s,
        s,
        \lambda,
        h_{a}
    \right),
    \qquad
    \hat{z}_0=z_a=E_\psi(x_{a}),
\end{equation}
where $\lambda=\log(1+\Delta)$. The ODE solution map $\Phi_\theta$ defines the predicted latent state at calendar time $\tau_a+\Delta$ and corresponding predicted observation state:
\begin{equation}
    \hat{z}_{1} = \Phi_\theta(z_{a},h_{a},\lambda), \qquad \hat{x}(\tau_{a}+\Delta)=D_\phi(\hat{z}_1).
\end{equation}

We report \texttt{EHRFlow} (GP-CFM) for Gaussian-process interpolants \citep{wei2025streamlevel} and \texttt{EHRFlow} (MMFM) for natural cubic-spline interpolants with bridge noise \citep{rohbeck2025modeling}. Full objectives, architectures, and optimisation details are provided in Appendix~\ref{app:ehrflow-impl}.  The training and inference procedure is summarized in Figure~\ref{fig:ehrflow-training} .

\begin{figure}[t]
    \centering
    \vspace{-10pt}\includegraphics[width=\linewidth]{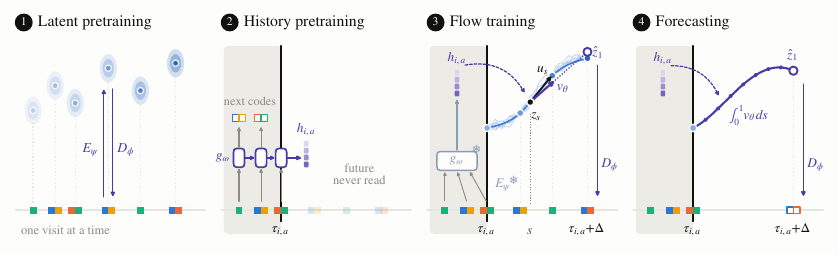}
    \vspace{-20pt}
    \caption{
\textbf{Training and forecasting with \texttt{EHRFlow}.}
(1) An encoder and decoder learn visit-level latent representations.
(2) A history encoder is pretrained on observations up to the forecast anchor.
(3) Both encoders are frozen. Future observations define paths and target velocities for training the history-conditioned field.
(4) At inference, the field is integrated from the encoded anchor and decoded into clinical predictions.
}
    \label{fig:ehrflow-training}
\vspace{-15pt}
\end{figure}
\section{Experiments}

We evaluate \texttt{EHRFlow} across controlled synthetic benchmarks and large-scale longitudinal EHR datasets. Our evaluation is structured around four questions:
\begin{enumerate}[leftmargin=4ex,itemsep=0.4ex,parsep=0ex,topsep=0.25ex]
    \item[(i)] \textbf{Trajectory forecasting:} How accurately can flow-based models forecast future clinical observations compared with autoregressive and continuous-time sequence models?
    \item[(ii)] \textbf{Real-world generalisation:} Do the benefits extend to primary- and secondary-care EHRs, including an independent external cohort?
    \item[(iii)] \textbf{History conditioning and optimisation}: Which aspects of historical conditioning and training contribute to forecasting performance?
    \item[(iv)] \textbf{Intervention modeling:} Can the learned dynamics be conditionally guided to recover a known intervention effect in a controlled simulation?
\end{enumerate}

\subsection{Experimental setup}

\paragraph{Datasets and tasks.}
We evaluate on eight datasets total, spanning from fully synthetic to real-world clinical observations. First, we validate on five controlled synthetic datasets, each with 20,000 simulated patients, where we vary the data-generating mechanism to model five distinct disease trajectories: \textit{Anthracycline}, which models cumulative treatment exposure; \textit{RA on DMARD}, which models delayed treatment response; \textit{CKD}, which models patient-specific progression rates; \textit{RRMS}, which models relapse history; and \textit{Heart failure}, which models a Markovian worsening trajectory (see Appendix \ref{app:synthetic}). Second, we validate on 23,072 patients generated using the clinical simulator Synthea \citep{Walonoski2017}. Finally, we validate on two real-world EHR datasets each containing over 500 million clinical events. The AMC-EHR dataset contains longitudinal electronic health records from 647,447 patients at Stanford Medicine in the United States. The second dataset contains two separate cohorts provided by the United Kingdom Clinical Practice Research Datalink (CPRD) \citep{Herrett2015}. The CPRD dataset provides a natural training and testing split, where the training ``Aurum" cohort comprises 815,092 patients drawn from primary-care practices and the externally validated ``Gold"  cohort comprises 53,599 patients from a separate set of practices and EHR software. Full cohort definitions, preprocessing, and split procedures are provided in Appendix \ref{app:experimental_details}. We evaluate forecasting at $\Delta\in\{90,180,365,730,1095\}$ days from the observed anchor. Our primary metric is top-5 clinical-code accuracy, which measures whether the recorded target code appears among the five highest-ranked predictions. This accommodates the possibility that patients have several concurrent conditions, only some of which are diagnosed or recorded at a given time. Top-1 accuracy and macro-AUROC across horizons are reported in Appendix~\ref{app:extended_results}.

\paragraph{Baselines.}
We compare against three families of clinical forecasting methods: first, two decoder-only and encoder--decoder autoregressive transformers \citep{Shmatko2025, Zhang2026Apollo} that directly predict future event tokens; second, GRU-ODE-Bayes \citep{DeBrouwer2019} and Neural CDEs \citep{Kidger2020} which model irregular observations through continuous hidden-state dynamics; and third, four flow-based baselines -- conditional flow matching (CFM) \citep{lipman2023flow}, Metric FM \citep{kapusniak2024metric}, Gaussian-process stream-level flow matching (GP-CFM) \citep{wei2025streamlevel}, and multi-marginal flow matching (MMFM) \citep{rohbeck2025modeling}. The flow-based methods use a common observation representation and vector-field architecture, with an additional geopath network for Metric FM and additional history inputs for \texttt{EHRFlow}. Autoregressive baselines are closely matched in parameter count to the flow-based methods, with only small residual differences due to architectural design. Architectural details, training objectives, and inference procedures are provided in Appendix \ref{app:baselines}.

\subsection{Synthetic trajectory forecasting}

\begin{table}[t]
\centering
\vspace{-15pt}
\caption{Top-5 clinical-code accuracy across synthetic benchmarks,
averaged over forecast horizons. Entries show mean $\pm$ standard
deviation across runs.}
\vspace{2pt}
\label{tab:synthetic_results}
\resizebox{\linewidth}{!}{%
\begin{tabular}{lcccccc}
\toprule
 & Anthracycline & RA on DMARD & CKD & RRMS & Heart failure & Synthea \\
\midrule
Decoder-only & 0.510 $\pm$ 0.017 & 0.821 $\pm$ 0.010 & 0.614 $\pm$ 0.008 & 0.600 $\pm$ 0.023 & 0.405 $\pm$ 0.019 & 0.432 $\pm$ 0.029 \\
Encoder-decoder & 0.507 $\pm$ 0.015 & 0.860 $\pm$ 0.007 & 0.668 $\pm$ 0.016 & 0.593 $\pm$ 0.013 & 0.403 $\pm$ 0.019 & 0.278 $\pm$ 0.121 \\
GRU-ODE-Bayes & 0.546 $\pm$ 0.022 & 0.829 $\pm$ 0.042 & 0.656 $\pm$ 0.022 & 0.524 $\pm$ 0.026 & 0.386 $\pm$ 0.064 & 0.439 $\pm$ 0.052 \\
Neural CDE & 0.411 $\pm$ 0.066 & 0.729 $\pm$ 0.061 & 0.577 $\pm$ 0.069 & 0.491 $\pm$ 0.044 & 0.305 $\pm$ 0.084 & 0.601 $\pm$ 0.027 \\
CFM (adjacent) & 0.574 $\pm$ 0.035 & 0.971 $\pm$ 0.003 & 0.812 $\pm$ 0.014 & 0.781 $\pm$ 0.024 & 0.602 $\pm$ 0.026 & 0.790 $\pm$ 0.014 \\
Metric FM & 0.597 $\pm$ 0.036 & 0.963 $\pm$ 0.004 & 0.789 $\pm$ 0.021 & 0.759 $\pm$ 0.062 & 0.627 $\pm$ 0.053 & 0.766 $\pm$ 0.008 \\
GP-CFM & 0.779 $\pm$ 0.004 & 0.977 $\pm$ 0.001 & 0.912 $\pm$ 0.003 & 0.895 $\pm$ 0.005 & 0.770 $\pm$ 0.005 & \textbf{0.875} $\pm$ 0.004 \\
MMFM & 0.785 $\pm$ 0.003 & 0.980 $\pm$ 0.001 & 0.913 $\pm$ 0.004 & 0.896 $\pm$ 0.003 & 0.767 $\pm$ 0.004 & 0.874 $\pm$ 0.003 \\
\midrule
\texttt{EHRFlow} (GP-CFM) & 0.863 $\pm$ 0.007 & \textbf{0.986} $\pm$ 0.002 & \textbf{0.941} $\pm$ 0.001 & \textbf{0.944} $\pm$ 0.005 & \textbf{0.787} $\pm$ 0.004 & \textbf{0.875} $\pm$ 0.003 \\
\texttt{EHRFlow}  (MMFM) & \textbf{0.865} $\pm$ 0.007 & 0.985 $\pm$ 0.002 & 0.940 $\pm$ 0.004 & 0.942 $\pm$ 0.004 & 0.775 $\pm$ 0.009 & \textbf{0.875} $\pm$ 0.003 \\
\bottomrule
\end{tabular}}
\vspace{-15pt}
\end{table}

We first evaluate across the six synthetic datasets modeling different clinical trajectories. Multi-marginal methods outperform adjacent CFM and Metric FM, with similar results across Gaussian-process and spline interpolants (Table~\ref{tab:synthetic_results}). Our results highlight that history conditioning provides further gains, particularly for cumulative exposure and relapse history. In the aggregate top-5 comparison, both \texttt{EHRFlow} variants outperform all baselines under a paired, stratified sign-flip randomisation test with Benjamini--Hochberg correction ($q<10^{-5}$). On Synthea, top-5 accuracy remains comparable to history-independent multi-marginal models, although history conditioning improves macro-AUROC at longer horizons (see Appendix~\ref{app:extended_results}).

\begin{wraptable}{r}{0.48\textwidth}
\centering
\vspace{-20pt}
\caption{Top-5 clinical-code accuracy on real-world CPRD and AMC-EHR datasets, averaged over forecast horizons.}
\vspace{2pt}
\label{tab:real_ehr}
\resizebox{\linewidth}{!}{%
\begin{tabular}{lcc}
\toprule
 & CPRD & AMC-EHR \\
\midrule
Decoder-only & 0.368 $\pm$ 0.012 & 0.259 $\pm$ 0.008 \\
Encoder-decoder & 0.335 $\pm$ 0.001 & 0.232 $\pm$ 0.037 \\
\midrule
GP-CFM & 0.554 $\pm$ 0.021 & 0.275 $\pm$ 0.004 \\
MMFM & 0.548 $\pm$ 0.014 & 0.279 $\pm$ 0.004 \\
\midrule
\texttt{EHRFlow} (GP-CFM) & \textbf{0.638} $\pm$ 0.009 & 0.283 $\pm$ 0.013 \\
\texttt{EHRFlow} (MMFM) & 0.621 $\pm$ 0.021 & \textbf{0.302} $\pm$ 0.016 \\
\bottomrule
\end{tabular}
}
\vspace{-10pt}
\end{wraptable}

Furthermore, we found that the benefit of history conditioning depends on the evaluation metric. On the Markovian heart-failure dataset, history conditioning provides smaller and less consistent gains than on datasets with explicitly history-dependent dynamics, consistent with history primarily helping recover the underlying state from incomplete observations. We additionally evaluate latent-state recovery against the known simulated latent state at forecasted time-points unobserved in the training data   (Appendix~\ref{app:extended_results}). Predictions at the same time point obtained directly or within longer rollouts show similar accuracy and $93–94\%$ mean top-5 overlap, supporting empirical trajectory consistency (see Table \ref{tab:trajectory_consistency_full}). \texttt{EHRFlow} achieves the lowest latent-state recovery error across synthetic benchmarks, while separate off-grid evaluations demonstrate generalisation to unseen forecast horizons (see Figure \ref{fig:offgrid}). Computational cost is also reduced: synthetic training and evaluation runs take 0.18--0.76 hours with \texttt{EHRFlow}, compared with 10.02--56.94 hours for autoregressive baselines (see Table~\ref{tab:compute}).

\subsection{Real-world trajectory forecasting}
Our results demonstrate that history conditioning also significantly improves horizon-averaged top-5 accuracy in real-world settings ($q < 0.05$, BH-corrected; Table~\ref{tab:real_ehr}). On the CPRD dataset, \texttt{EHRFlow} improves over its history-independent counterparts by 8.4 percentage points with GP-CFM interpolation and 7.3 points with MMFM. On the AMC-EHR dataset, the corresponding gains are 0.8 and 2.3 points. Meanwhile, the comparison with autoregressive methods depends on the metric and forecast horizon. Both \texttt{EHRFlow} variants outperform the autoregressive baselines on horizon-averaged top-5 accuracy. For short horizons, we observe autoregressive models achieve higher macro-AUROC on AMC-EHR, whereas \texttt{EHRFlow} (MMFM) performs best at longer horizons (Appendix~\ref{app:extended_results}). This pattern likely reflects the error accumulation that occurs during autoregressive rollout. The gains on both primary- and secondary-care records indicate that the benefits of history conditioning extend across healthcare settings, including the independent CPRD Gold cohort used for external validation.

\subsection{History-conditioning and optimisation ablations}

We conduct targeted ablations to isolate the contribution of history conditioning, its incorporation, and key optimisation choices (see Table~\ref{tab:ablations}).

\paragraph{Parameter matching.} Because conditioning the vector field on a history representation increases the input dimensionality of the MLP, \texttt{EHRFlow} contains slightly more parameters than the corresponding history-independent GP-CFM and MMFM baselines. We therefore reduce the width of the vector-field MLP to match the baselines' parameter count. The resulting performance is largely unchanged, indicating that our method's improvement does not arise simply from increased model capacity. As a complementary control, we retain the full \texttt{EHRFlow} architecture but replace the history representation with zeros. Performance then returns to that of the history-independent baselines.

\begin{wraptable}{r}{0.52\textwidth}
\centering
\vspace{-22pt}
\caption{\texttt{EHRFlow} ablations: top-5 clinical-code accuracy
averaged across synthetic datasets and forecast horizons.}
\vspace{2pt}
\label{tab:ablations}
\resizebox{\linewidth}{!}{%
\begin{tabular}{lcc}
\toprule
 & \multicolumn{2}{c}{Average top-5 acc.} \\
\cmidrule(lr){2-3}
 & GP-CFM & MMFM \\
\midrule
\texttt{EHRFlow} & \textbf{0.899} $\pm$ 0.001 & \textbf{0.897} $\pm$ 0.002 \\
\texttt{EHRFlow}  matched & 0.897 $\pm$ 0.002 & \textbf{0.897} $\pm$ 0.002 \\
\midrule
\texttt{EHRFlow}, zero context & 0.868 $\pm$ 0.002 & 0.869 $\pm$ 0.001 \\
\texttt{EHRFlow}, end-to-end GRU & 0.873 $\pm$ 0.004 & 0.878 $\pm$ 0.003 \\
\texttt{EHRFlow}, fully end-to-end & 0.876 $\pm$ 0.008 & 0.865 $\pm$ 0.004 \\
\texttt{EHRFlow}, no decoder loss & 0.599 $\pm$ 0.013 & 0.606 $\pm$ 0.011 \\
History as source & 0.865 $\pm$ 0.003 & 0.867 $\pm$ 0.003 \\
History as source + context & 0.864 $\pm$ 0.007 & 0.865 $\pm$ 0.003 \\
No history & 0.868 $\pm$ 0.001 & 0.869 $\pm$ 0.002 \\
\midrule
CFM (adjacent) + history & \multicolumn{2}{c}{0.776 $\pm$ 0.005} \\
CFM (adjacent) & \multicolumn{2}{c}{0.755 $\pm$ 0.007} \\
CFM (MMFM windows) + history & \multicolumn{2}{c}{0.808 $\pm$ 0.007} \\
\bottomrule
\end{tabular}}
\vspace{-10pt}
\end{wraptable}

\paragraph{How history is incorporated.} We next show that using the history representation as the flow source, regardless of whether we additionally condition on history, performs worse than our choice of using the current patient state as the source while conditioning on history. We also show that training the history encoder jointly with the flow objective, instead of pretraining and freezing it, performs worse than using a pretrained history representation. Fully end-to-end optimisation of the latent encoder, decoder, history encoder, and vector field likewise reduces accuracy, supporting the use of pretrained representations under the evaluated optimisation settings. We also found that removing the auxiliary decoder loss, thereby freezing it, substantially reduces accuracy, highlighting the importance of observation-space supervision during flow training. Finally, adding history conditioning to adjacent CFM also improves forecasting, but remains below the multi-marginal models. These ablations indicate that the gains of \texttt{EHRFlow} arise from both explicit historical conditioning and multi-observation trajectory supervision. Dataset-level results are provided in Appendix \ref{app:extended_results}. We additionally vary the history-window length to assess sensitivity to the amount of preceding context.

\subsection{Guided intervention modelling}
As an additional controlled experiment, we test whether \texttt{EHRFlow} captures intervention-dependent patient dynamics in a simulation with known ground truth.

\textbf{Evaluation protocol.}
We construct a Synthea hypertension experiment in which the same patient population is simulated twice from identical initial random states, where one population has antihypertensive prescriptions suppressed. Treatment is defined as antihypertensive initiation and the outcome as systolic blood pressure, with simulator-defined treatment effect as $\theta=-20.48$ mmHg. We condition \texttt{EHRFlow} on treatment assignment using classifier-free guidance \citep{Ho2022} and evaluate both matched trajectories, where both treatment conditions are available for each patient, and a split-arm setting, where each patient contributes to only one condition. The guidance weight is selected using validation outcome-prediction error without access to $\theta$.

\begin{wraptable}{r}{0.48\textwidth}
\centering
\vspace{-10pt}
\caption{Recovery of the simulator-defined hypertension effect with CFG.}
\vspace{2pt}
\label{tab:treatment_arms}
\resizebox{\linewidth}{!}{%
\begin{tabular}{lccc}
\toprule
Design & Observed & Guided & Error \\
\midrule
\multicolumn{4}{l}{\textbf{EHRFlow (GP-CFM)}} \\
Paired    & $-23.20 \pm 0.59$ & $-19.42 \pm 2.33$ & $1.06 \pm 2.33$ \\
Split-arm & $-22.58 \pm 0.59$ & $-18.41 \pm 1.62$ & $2.07 \pm 1.62$ \\
\midrule
\multicolumn{4}{l}{\textbf{EHRFlow (MMFM)}} \\
Paired    & $-23.20 \pm 0.59$ & $-18.83 \pm 2.27$ & $1.65 \pm 2.27$ \\
Split-arm & $-22.58 \pm 0.59$ & $-20.41 \pm 1.72$ & $0.07 \pm 1.72$ \\
\bottomrule
\end{tabular}%
}
\vspace{-15pt}
\end{wraptable}
\textbf{Results.}
Guided \texttt{EHRFlow} approximates the simulator-defined effect with an average error of 0.07--2.07\,mmHg across both models and evaluation settings, equivalent to a relative error of 0.3\%--10.1\% (Table~\ref{tab:treatment_arms}). These results suggest that the learned latent trajectories can preserve clinically meaningful changes in individual physiological variables, as demonstrated here for blood pressure under a simulated intervention.

\section{Discussion}

Generative models of patient trajectories have attracted growing interest \citep{liu2025conditional}, with language-inspired transformers demonstrating success in forecasting clinical events \citep{Shmatko2025, Waxler2025}. However, autoregressive forecasting typically requires sequential rollout and represents patient evolution through discrete observations. Encoder--decoder architectures such as APOLLO offer alternative ways to encode clinical history and predict future events, but inherently model trajectories as a sequence of discrete events \citep{Zhang2026Apollo}. Continuous-time models instead represent evolving latent states, although numerical integration during training can make learning computationally demanding. Flow matching offers a scalable route to learning these dynamics through simulation-free vector-field training.

Applying flow matching to longitudinal EHRs requires several adaptations. Pairwise conditional flow matching supervises paths between two observations, whereas multi-marginal methods incorporate multiple observations that guide the trajectory \citep{rohbeck2025modeling,wei2025streamlevel}. However, multi-observation supervision does not itself provide inference-time access to preceding patient history. Clinical forecasting also requires suitable latent representations and accommodation of irregular observations and varying forecast durations. \texttt{EHRFlow} combines pretrained observation and history encoders with history-conditioned dynamics and duration-dependent normalised flow time. Jointly fine-tuning the vector field and decoder with a single-step linear endpoint surrogate provides observation-space supervision without numerical integration during training.

The results support the value of combining continuous latent dynamics with historical context for individual patient forecasting. Historical context helps distinguish patients with similar current records but different subsequent trajectories, with benefits extending to large-scale primary- and secondary-care datasets. Furthermore, our results on synthetic state recovery and forecasting at unseen horizons demonstrate the quality of our model's learned patient representation. Our method's similar performance across rollout durations further supports forecast consistency, while similar performance across interpolant families suggests that the findings are not specific to path construction.

\paragraph{Limitations.}
Continuous latent modelling does not guarantee recovery of biological dynamics, as EHRs also reflect coding practices, missingness, and informative observation times driven by healthcare utilisation that our model does not explicitly capture. Our evaluation is limited to continuous and discrete clinical variables, although the encoder-based formulation supports incorporating histopathology, imaging, and genomic data; the effectiveness of these extensions remains to be demonstrated. Predictive gains also depend on the metric and horizon, with autoregressive models remaining competitive on macro-AUROC at shorter horizons. Our staged training procedure outperforms fully end-to-end optimisation under the evaluated settings, motivating future work on objectives that more effectively co-adapt latent representations and dynamics. Finally, the intervention experiment is a controlled proof of concept rather than evidence of causal treatment-effect estimation from observational data. Broader external validation, uncertainty assessment, and prospective evaluation remain necessary before clinical use.

\paragraph{Conclusion and clinical perspective.}
\texttt{EHRFlow} enables continuous, history-conditioned forecasting from population-scale clinical records, potentially supporting earlier intervention and personalised care. Future work should examine whether the evolving latent embeddings can also serve as reusable representations for downstream clinical tasks \citep{Zhang2026Apollo}. More broadly, this formulation offers a step towards patient digital twins that could help explore treatment strategies and optimise care, subject to further clinical validation.

\section{Acknowledgements}

The Cambridge Centre for AI in Medicine (CCAIM) receives funding from
GSK, Boehringer-Ingelheim, AstraZeneca, Sanofi, Takeda and Quantum Black,
AI by McKinsey. This study is based in part on data from the Clinical Practice Research Datalink
(CPRD) obtained under licence from the UK Medicines and Healthcare products
Regulatory Agency (MHRA). This work uses data provided by patients and collected
by the NHS as part of their care and support. The interpretation and conclusions
contained in this study are those of the authors alone. Access to the CPRD data was supported by the NIHR and the BHF. This research used data or services provided by STARR, ``STAnford medicine
Research data Repository,'' a clinical data warehouse containing live Epic data
from Stanford Health Care, the Stanford Children's Hospital, the University
Healthcare Alliance and Packard Children's Health Alliance clinics and other
auxiliary data from hospital applications such as radiology PACS. The STARR
platform is developed and operated by the Stanford Medicine Research Technology
team and is made possible by funding from the Stanford School of Medicine
Dean's Office.
\bibliography{iclr2027_conference}
\bibliographystyle{iclr2027_conference}

\newpage

\appendix

\AppendixBanner

\section{Extended related work}

\paragraph{Patient representations and generative latent dynamics.}
Beyond the clinical sequence models discussed in the main text, pretrained models learn reusable representations from structured electronic health records for transfer to downstream clinical tasks \citep{Steinberg2020CLMBR,Guo2024,Pang2024cehr}. Earlier deep generative state-space models learned latent transition and observation distributions from longitudinal data, including applications to patient trajectories and treatment counterfactuals \citep{Krishnan2015,Krishnan2017}. 

\paragraph{Synthetic longitudinal health records.}
A complementary literature develops generative models for synthetic health data. Early approaches such as medGAN generate discrete patient records using adversarial training \citep{Choi2017medgan}, while recurrent conditional GANs generate multivariate medical time series \citep{Esteban2017}. Subsequent methods model longitudinal EHR structure, including PromptEHR \citep{wang-sun-2022-promptehr} and the hierarchical autoregressive model HALO \citep{Theodorou2023}. Other approaches jointly generate continuous measurements and discrete clinical variables \citep{Li2023MixedEHR}, while PatientFlow applies flow matching to generate synthetic longitudinal clinical records \citep{Branco2026}. 

\paragraph{Diffusion-based time-series modeling.}
Diffusion models provide another framework for generative modeling of temporal data. TimeGrad combines autoregressive forecasting with conditional denoising diffusion \citep{Rasul2021TimeGrad}, while CSDI conditions score-based diffusion on observed values for probabilistic imputation and also supports interpolation and forecasting \citep{Tashiro2021CSDI}. Non-autoregressive conditional diffusion models generate future sequences jointly, avoiding sequential sampling across forecast time steps \citep{Shen2023TimeDiff}. These approaches offer alternatives for modeling distributions over temporal observations, whereas \texttt{EHRFlow} learns a vector field describing latent patient evolution over the forecast interval.

\paragraph{Biological trajectories and conditional transport.}
Related trajectory-inference problems arise in biology, where destructive measurements often prevent repeated observation of the same cell. Waddington-OT infers couplings between cell populations sampled at different times \citep{Schiebinger2019}, while TrajectoryNet learns continuous transport dynamics from such snapshots \citep{Tong2020TrajectoryNet}. Manifold-based approaches incorporate data geometry into trajectory reconstruction \citep{huguet2022manifold}, and CellBRIDGE incorporates interaction information when aligning cellular states across time \citep{estevez2026cellbridge}. Meta Flow Matching conditions on population embeddings to generalise across initial distributions \citep{atanackovic2025meta}, while related work combines score and flow matching to learn stochastic dynamics through Schr\"odinger bridges from unpaired samples \citep{Tong2024schrodinger}. Longitudinal EHRs provide within-patient correspondence across observations, allowing us to construct training paths directly and condition forecasts on individual history.

\paragraph{Treatment-conditioned generation and counterfactual prediction.}
Classifier-free guidance combines conditional and unconditional model predictions to control generation without a separate classifier \citep{Ho2022}. We use this mechanism to generate treatment-conditioned trajectories in a controlled simulation. A distinct literature targets counterfactual outcome prediction from observational data: recurrent marginal structural networks model treatment assignment and outcomes over time \citep{Lim2018fore}, while Counterfactual Recurrent Networks \citep{Bica2020CRN} and the Causal Transformer \citep{Melnychuk2022CT} learn representations designed to address time-varying confounding. These methods rely on causal identification assumptions. Our intervention experiment instead evaluates recovery of a known simulator-defined treatment effect; conditional guidance alone does not establish causal identification from observational EHRs.

\section{Theoretical motivation for history conditioning}
\label{app:theory}

We establish that history conditioning cannot increase the optimal population flow-matching loss,
and give a simple example separating population marginal matching from individual trajectory forecasting.
\subsection{History conditioning reduces the optimal flow-matching loss}
\begin{theorem}[Optimality of history conditioning]
Let
\[
X=(Z_s,s,\lambda),
\]
let $H$ denote the history representation, and let $U\in\mathbb{R}^{d_z}$ be the target velocity with $\mathbb{E}\|U\|_2^2<\infty$. Define
\[
R_0^*
=
\inf_v \mathbb{E}\|U-v(X)\|_2^2,
\qquad
R_H^*
=
\inf_w \mathbb{E}\|U-w(X,H)\|_2^2.
\]
Then
\[
R_0^*-R_H^*
=
\mathbb{E}
\left\|
\mathbb{E}[U\mid X,H]
-
\mathbb{E}[U\mid X]
\right\|_2^2
\geq 0.
\]
Equality holds if and only if
\[
\mathbb{E}[U\mid X,H]
=
\mathbb{E}[U\mid X]
\quad\text{a.s.}
\]
\end{theorem}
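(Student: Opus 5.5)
The plan is to identify both infima with conditional expectations via the $L^2$ projection characterisation, and then compute their difference with the Pythagorean identity.

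\textbf{Step 1: identify $R_H^*$.} Since $\mathbb{E}\|U\|_2^2<\infty$, the conditional expectations $m_0(X)=\mathbb{E}[U\mid X]$ and $m_H(X,H)=\mathbb{E}[U\mid X,H]$ exist and are square integrable. For any measurable $w$ with $\mathbb{E}\|w(X,H)\|_2^2<\infty$, expand
\[
\mathbb{E}\|U-w(X,H)\|_2^2
=
\mathbb{E}\|U-m_H\|_2^2
+
\mathbb{E}\|m_H-w(X,H)\|_2^2 .
\]
The cross term $2\,\mathbb{E}\langle U-m_H,\,m_H-w\rangle$ vanishes by the tower property, because $m_H-w$ is $\sigma(X,H)$-measurable. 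If $w(X,H)$ is not square integrable, the loss is infinite, since $U$ is square integrable. Hence the infimum is attained at $w=m_H$, and $R_H^*=\mathbb{E}\|U-m_H\|_2^2$.

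\textbf{Step 2: identify $R_0^*$.} The same argument with $\sigma(X)$ in place of $\sigma(X,H)$ gives $R_0^*=\mathbb{E}\|U-m_0\|_2^2$.

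\textbf{Step 3: compute the difference.} Write $U-m_0=(U-m_H)+(m_H-m_0)$. Since $m_H-m_0$ is $\sigma(X,H)$-measurable, the cross term again vanishes by the tower property. Therefore
\[
R_0^*=R_H^*+\mathbb{E}\|m_H-m_0\|_2^2 ,
\]
which is the claimed identity, and nonnegativity is immediate. Equivalently, one can apply Step 1 with $w(X,H)=m_0(X)$ and read off the excess risk.

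\textbf{Step 4: equality case.} We have $\mathbb{E}\|m_H-m_0\|_2^2=0$ if and only if $m_H=m_0$ almost surely. Since both $R_0^*$ and $R_H^*$ are finite, the difference is well defined.

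\textbf{Main obstacle.} The only delicate point is measure-theoretic: justifying that the infima may be restricted to square-integrable measurable functions, and that the cross terms vanish. The cross-term argument conditions inside $\mathbb{E}\langle U-m_H,\,g\rangle$ for $g$ measurable with respect to the conditioning $\sigma$-algebra. This requires $\langle U-m_H,\,g\rangle$ to be integrable, which follows from Cauchy--Schwarz once $g$ is square integrable. I would state these facts briefly rather than belabour them.
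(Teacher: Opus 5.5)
Your proposal is correct and follows essentially the same route as the paper. Both identify the optimal risks with the conditional means $m_0$ and $m_H$, split $U-m_0=(U-m_H)+(m_H-m_0)$, and eliminate the cross term via $\mathbb{E}[U-m_H\mid X,H]=0$. Where you differ is in rigour: you prove what the paper takes for granted, namely that squared-error risk is minimised by the conditional mean, that non-square-integrable competitors have infinite risk, and the equality case.
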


\begin{proof}
Squared-error risk is minimised by the conditional mean. Let
\[
m_0=\mathbb{E}[U\mid X],
\qquad
m_H=\mathbb{E}[U\mid X,H].
\]
Then
\[
R_0^*
=
\mathbb{E}\|U-m_0\|_2^2
=
\mathbb{E}\|(U-m_H)+(m_H-m_0)\|_2^2.
\]
Since
\[
\mathbb{E}[U-m_H\mid X,H]=0,
\]
the cross term vanishes, giving
\[
R_0^*
=
R_H^*
+
\mathbb{E}\|m_H-m_0\|_2^2.
\]
\end{proof}

Thus, the benefit of history is determined by the squared change it induces in the conditional mean velocity. If the expected dynamics are already determined by the current state and time, history provides no improvement.

\subsection{Population marginals do not determine individual trajectories}
 We next give a simple Gaussian example showing that the same distinction also appears after integrating the learned field: two fields can generate the same population distribution while producing different forecasts for individual trajectories.

Consider
\[
Z_t=A+tB,
\qquad
A,B\overset{\mathrm{iid}}{\sim}\mathcal{N}(0,1),
\qquad
t\in[-1,1].
\]
Here $A$ determines the state at the forecast anchor and $B$ determines the patient-specific rate of progression. The anchor is
\[
Z_0=A.
\]
Suppose the previous state is observed with noise,
\[
Y_{-1}=Z_{-1}+\eta,
\qquad
\eta\sim\mathcal{N}(0,\sigma^2),
\]
independently of $(A,B)$. We define the history representation
\[
H=Z_0-Y_{-1}=B-\eta.
\]
Its reliability is
\[
\rho^2
=
\operatorname{Corr}(H,B)^2
=
\frac{1}{1+\sigma^2},
\qquad
\kappa=1-\rho^2.
\]
Thus, $\rho^2$ increases as the history becomes more informative about the patient's progression rate, with $\rho=1$ corresponding to noiseless history.

During training the model uses the path
\[
Z_s=A+sB,
\qquad
U=B,
\qquad
s\sim\mathcal{U}(0,1).
\]

\begin{theorem}[Population marginals versus individual trajectories]
For every $\rho\in(0,1]$, the following hold:

\begin{enumerate}[leftmargin=*,itemsep=0.4ex,parsep=-0.1ex,topsep=0.25ex]
    \item \textbf{Optimal fields.}
    The population-optimal history-independent and history-conditioned fields are
    \[
    v^*(z,s)
    =
    \frac{s}{1+s^2}z,
    \qquad
    w^*(z,s,h)
    =
    \rho^2 h
    +
    \frac{s\kappa}{1+\kappa s^2}
    \left(z-s\rho^2 h\right).
    \]

    \item \textbf{Flow-matching risks.}
    The corresponding optimal risks are
    \[
    R_0^*
    =
    \frac{\pi}{4},
    \qquad
    R_H^*
    =
    \sqrt{\kappa}\arctan\sqrt{\kappa}.
    \]
    Hence $R_H^*<R_0^*$ for $\rho>0$, and $R_H^*=0$ when $\rho=1$.

    \item \textbf{Integrated flows.}
    Starting from $Z_0=A$, the resulting trajectories are
    \[
    \widehat Z_s^{\,0}
    =
    A\sqrt{1+s^2},
    \qquad
    \widehat Z_s^{\,H}
    =
    s\rho^2H
    +
    A\sqrt{1+\kappa s^2}.
    \]
    Both recover the correct population marginals:
    \[
    \widehat Z_s^{\,0}
    \overset{d}{=}
    \widehat Z_s^{\,H}
    \overset{d}{=}
    Z_s
    \sim
    \mathcal{N}(0,1+s^2),
    \]
    and the history-conditioned flow additionally satisfies
    \[
    \widehat Z_s^{\,H}\mid H
    \overset{d}{=}
    Z_s\mid H.
    \]

    \item \textbf{Individual forecast error.}
    For every $s\in(0,1]$,
    \[
    \mathbb{E}
    \left[
    \left(\widehat Z_s^{\,0}-Z_s\right)^2
    \right]
    =
    \left(\sqrt{1+s^2}-1\right)^2+s^2,
    \]
    whereas
    \[
    \mathbb{E}
    \left[
    \left(\widehat Z_s^{\,H}-Z_s\right)^2
    \right]
    =
    \left(\sqrt{1+\kappa s^2}-1\right)^2+\kappa s^2.
    \]
    The history-conditioned error is strictly smaller for $\rho>0$,
    decreases with $\rho^2$, and vanishes when $\rho=1$.
\end{enumerate}
\end{theorem}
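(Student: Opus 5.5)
The whole argument is Gaussian conditioning. By Theorem~1, squared-error risk is minimised by the conditional mean, so the optimal fields are $v^*(z,s)=\mathbb{E}[B\mid Z_s=z]$ and $w^*(z,s,h)=\mathbb{E}[B\mid Z_s=z,H=h]$.

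\textbf{Optimal fields.} For $v^*$, note that $(B,Z_s)$ is jointly Gaussian with $\operatorname{Cov}(B,Z_s)=s$ and $\operatorname{Var}Z_s=1+s^2$. Hence $v^*=sz/(1+s^2)$.

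For $w^*$, I will condition in two stages. First, $H=B-\eta$ gives $B\mid H\sim\mathcal{N}(\rho^2 h,\kappa)$, because $\operatorname{Cov}(B,H)=1$ and $\operatorname{Var}H=1+\sigma^2=1/\rho^2$. Given $H=h$, we have $A\perp (B,H)$, so $Z_s\mid H=h\sim\mathcal{N}(s\rho^2h,\,1+\kappa s^2)$, with conditional covariance $s\kappa$ with $B$. Conditioning on $Z_s$ then gives the stated $w^*$.

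\textbf{Risks.} The pointwise residual variances are $1/(1+s^2)$ and $\kappa-\kappa^2s^2/(1+\kappa s^2)=\kappa/(1+\kappa s^2)$. Averaging over $s\sim\mathcal{U}(0,1)$ uses $\int_0^1\kappa/(1+\kappa s^2)\,ds=\sqrt\kappa\arctan\sqrt\kappa$. This gives $\pi/4$ at $\kappa=1$ and the stated $R_H^*$. Since $x\mapsto x\arctan x$ is strictly increasing on $[0,1]$, we get $R_H^*<R_0^*$ for $\rho>0$, and $R_H^*=0$ when $\kappa=0$.

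\textbf{Integrated flows.} Both ODEs are linear in $z$, so I will verify the claimed solutions directly rather than solve from scratch.

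For $\widehat Z^0_s=A\sqrt{1+s^2}$, the derivative is $As/\sqrt{1+s^2}=\frac{s}{1+s^2}\widehat Z^0_s$, matching $v^*$.

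For $\widehat Z^H_s$, write $\widehat Z^H_s-s\rho^2H=A\sqrt{1+\kappa s^2}$. Differentiating gives $\rho^2H+A\kappa s/\sqrt{1+\kappa s^2}$, which equals $w^*$ after substitution. Both initial values are $A$.

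For the marginals, $A\sqrt{1+s^2}\sim\mathcal{N}(0,1+s^2)$ directly. Conditionally on $H$, $\widehat Z^H_s\sim\mathcal{N}(s\rho^2H,1+\kappa s^2)$, which matches $Z_s\mid H$ as computed above. Integrating over $H$ gives variance $s^2\rho^4\operatorname{Var}H+1+\kappa s^2=s^2\rho^2+1+\kappa s^2=1+s^2$.

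\textbf{Forecast errors.} For the first error, $\widehat Z^0_s-Z_s=A(\sqrt{1+s^2}-1)-sB$, with independent terms, which gives the first formula.

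For the second, $\widehat Z^H_s-Z_s=A(\sqrt{1+\kappa s^2}-1)-s(B-\rho^2H)$. Here $B-\rho^2H$ is the Gaussian regression residual, independent of $H$ and of $A$, with variance $\kappa$. The second formula follows.

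Both expressions have the form $f(\kappa)=(\sqrt{1+\kappa s^2}-1)^2+\kappa s^2$, which is strictly increasing in $\kappa$. Hence the error is smaller for $\kappa<1$, i.e.\ $\rho>0$, decreases as $\rho^2$ grows, and is $0$ at $\kappa=0$.

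The main obstacle is the second conditioning step for $w^*$. There one must get the conditional covariance $\operatorname{Cov}(B,Z_s\mid H)=s\kappa$ right and confirm that $A$ stays independent given $H$. The same independence of the residual $B-\rho^2H$ from $(A,H)$ is also what makes the forecast-error cross term vanish.
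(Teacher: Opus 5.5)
Your proposal is correct and follows the paper's proof essentially step for step. It uses the same two-stage Gaussian conditioning through $B\mid H\sim\mathcal{N}(\rho^2h,\kappa)$ and $\operatorname{Cov}(B,Z_s\mid H)=s\kappa$, the same integrated residual variances, and the same decomposition $B=\rho^2H+R$ with $R$ independent of $(A,H)$ for the forecast errors. The differences are cosmetic: you verify the ODE solutions by differentiation (uniqueness follows from linearity) where the paper simply states the integrated flows, and you compute the marginal variance $1+s^2$ explicitly where the paper marginalises the matched conditional law.
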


\begin{proof}
Since $(B,H)$ is jointly Gaussian,
\[
B\mid H=h
\sim
\mathcal{N}(\rho^2h,\kappa).
\]
Conditionally on $H=h$,
\[
\mathbb{E}[Z_s\mid h]=s\rho^2h,
\qquad
\operatorname{Var}(Z_s\mid h)=1+\kappa s^2,
\qquad
\operatorname{Cov}(B,Z_s\mid h)=s\kappa.
\]
Gaussian conditioning therefore gives
\[
\mathbb{E}[B\mid Z_s=z,H=h]
=
\rho^2h
+
\frac{s\kappa}{1+\kappa s^2}
\left(z-s\rho^2h\right),
\]
which is $w^*$. Without conditioning on history,
\[
\mathbb{E}[B\mid Z_s=z]
=
\frac{s}{1+s^2}z,
\]
which gives $v^*$.

The corresponding conditional variances are
\[
\operatorname{Var}(B\mid Z_s)
=
\frac{1}{1+s^2},
\qquad
\operatorname{Var}(B\mid Z_s,H)
=
\frac{\kappa}{1+\kappa s^2}.
\]
Integrating over $s\in[0,1]$ yields
\[
R_0^*
=
\int_0^1\frac{1}{1+s^2}\,ds
=
\frac{\pi}{4},
\]
and
\[
R_H^*
=
\int_0^1\frac{\kappa}{1+\kappa s^2}\,ds
=
\sqrt{\kappa}\arctan\sqrt{\kappa}.
\]

Integrating the optimal fields from $\widehat Z_0=A$ gives
\[
\widehat Z_s^{\,0}
=
A\sqrt{1+s^2},
\qquad
\widehat Z_s^{\,H}
=
s\rho^2H+A\sqrt{1+\kappa s^2}.
\]
Since $A$ is independent of $H$,
\[
\widehat Z_s^{\,H}\mid H=h
\sim
\mathcal{N}
\left(
s\rho^2h,
1+\kappa s^2
\right),
\]
which is exactly the conditional distribution of $Z_s\mid H=h$. Marginalising over $H$ gives
\[
\widehat Z_s^{\,H}
\sim
\mathcal{N}(0,1+s^2),
\]
and the same population marginal is obtained by $\widehat Z_s^{\,0}$.

Finally, write
\[
B=\rho^2H+R,
\qquad
R\sim\mathcal{N}(0,\kappa),
\]
with $R$ independent of $(A,H)$. Then
\[
\widehat Z_s^{\,H}-Z_s
=
A\left(\sqrt{1+\kappa s^2}-1\right)-sR,
\]
while
\[
\widehat Z_s^{\,0}-Z_s
=
A\left(\sqrt{1+s^2}-1\right)-sB.
\]
Independence of the two terms in each expression gives the stated mean-squared errors. The history-conditioned error is strictly increasing in $\kappa=1-\rho^2$, and therefore decreases as the history becomes more reliable. For $\rho=1$, we have $\kappa=0$ and $H=B$, so
\[
\widehat Z_s^{\,H}=A+sB=Z_s.
\]
\end{proof}

Figure~\ref{fig:theorem-2-exp} evaluates Theorem~2 empirically. We train both fields as small MLPs by flow-matching regression on samples from the linear-Gaussian model, integrate them from $Z_0 = A$, and evaluate on held-out samples for $\rho \in \{0.1, \dots, 1\}$. The learned flows match the closed forms in all three respects: forecast error, optimal risk, and marginal and conditional variance.

\begin{figure}[h]
    \centering
    \includegraphics[width=\linewidth]{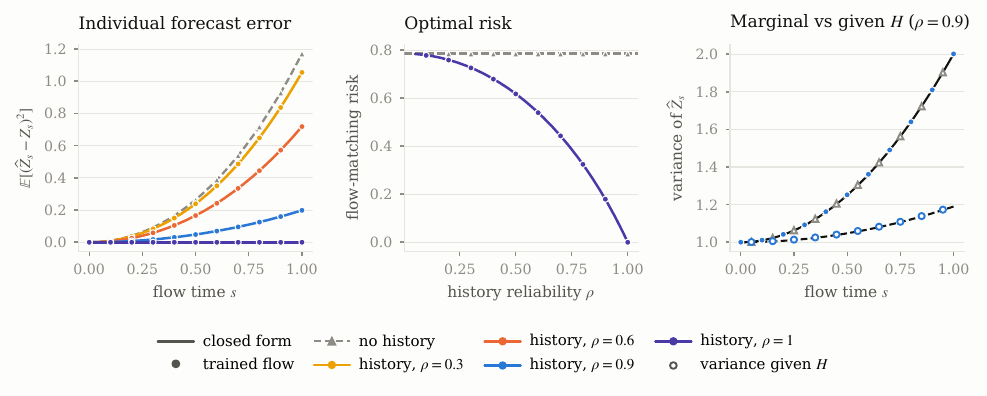}
    \caption{ \textbf{Left:} individual forecast error against flow time. \textbf{Centre:} flow-matching risk against history reliability $\rho$. \textbf{Right:} variance of the generated state at $\rho = 0.9$; both flows recover the marginal $1+s^2$, only the history-conditioned flow recovers the variance given $H$, $1+\kappa s^2$.}
    \label{fig:theorem-2-exp}
\end{figure}

\newpage

\section{Extended results}
\label{app:extended_results}

\subsection{The choice of the interpolant}

Across both the synthetic and real-world datasets, we observed that the choice of multi-marginal interpolant had a substantially smaller effect on predictive performance than history conditioning. This suggests that, provided the interpolant captures the observed marginals, much of the variation between particular interpolation schemes may be attenuated when the corresponding vector field is learned by the neural network.

We illustrate this effect in Figure~\ref{fig:interpolant_equivalence}. For a fixed set of observed marginals, multiple valid interpolation paths exist that pass through the same knots. Although different interpolants induce substantially different conditional velocity
targets, flow matching learns a marginal vector field by regressing over these conditional
targets \citep{lipman2023flow,albergo2023building}. This regression can therefore attenuate
interpolant-specific variability, providing a possible explanation for the substantially
smaller differences observed between the fitted fields in
Figure~\ref{fig:interpolant_equivalence}. 

\begin{figure}[!htb]
    \centering
    \includegraphics[width=\linewidth]{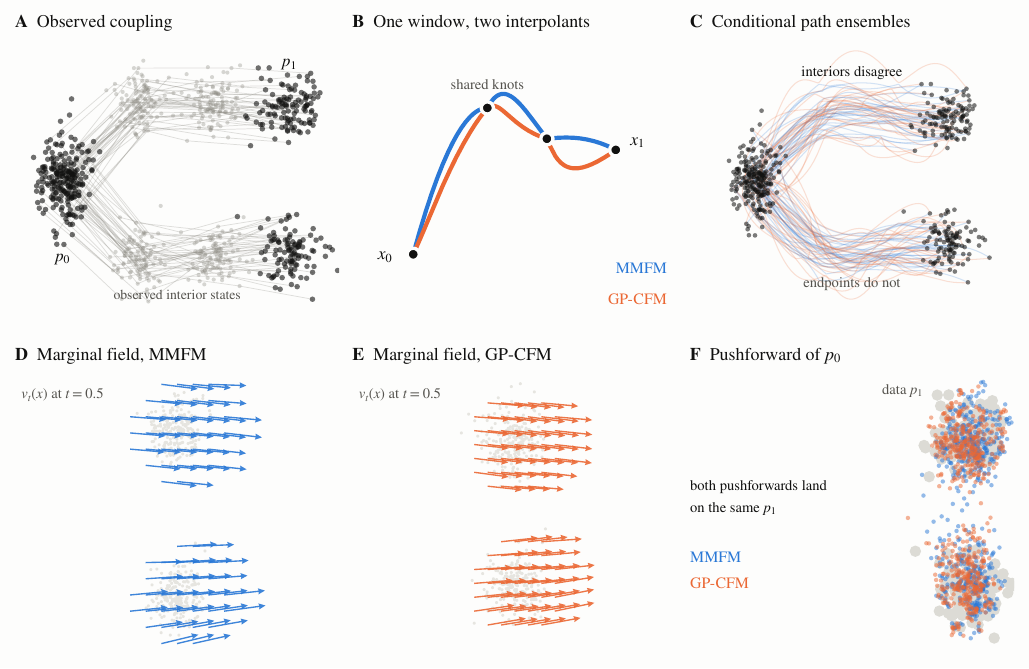}
    \caption{Smooth vector-field paths through the same knots on synthetic data.}
    \label{fig:interpolant_equivalence} 
\end{figure}
We examine this phenomenon more systematically in Figure~\ref{fig:interpolant_smoothing} using a two-dimensional synthetic example with six different knot-interpolating families. The conditional paths, and consequently their velocity targets, differ substantially across interpolants. However, these differences are reduced after fitting the same MLP architecture to the corresponding targets, consistent with the network averaging over variation in the conditional paths. Differences are reduced further after integrating the fitted vector fields: while small discrepancies remain at $t=0.5$, the final pushforward distributions are nearly indistinguishable. Across ten seeds, pairwise Wasserstein-1 distances between the predicted terminal distributions are only $0.02$--$0.04$. 
\begin{figure}[!htb]
    \centering
    \includegraphics[width=\linewidth]{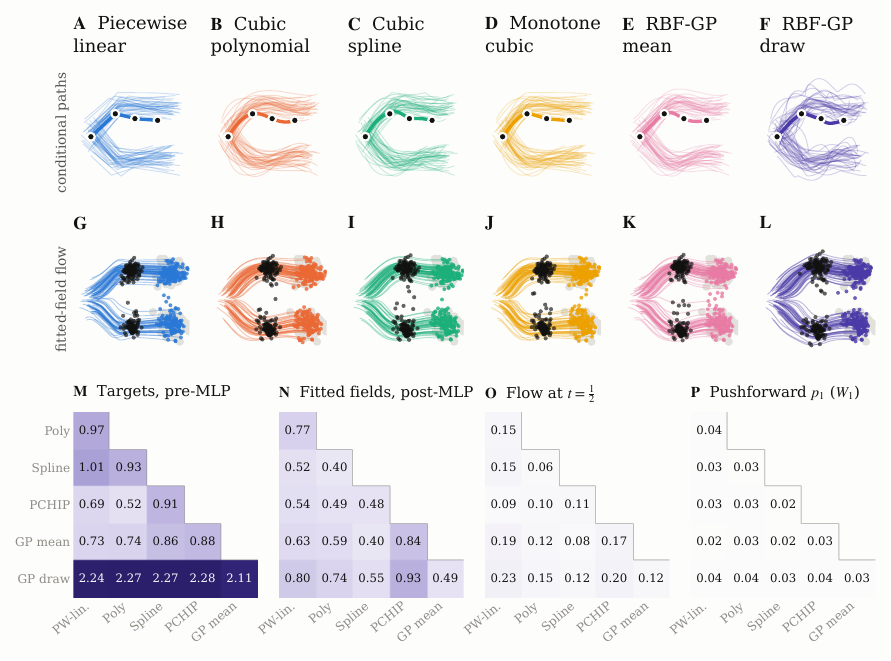}
    \caption{Six knot-interpolating families fitted with the same network architecture (10 seeds). Differences between conditional paths are reduced in the fitted flows, with pairwise endpoint Wasserstein-1 distances of 0.02--0.04.}
    \label{fig:interpolant_smoothing} 
\end{figure}

\newpage

\subsection{Extended forecasting metrics}

We extend the horizon-averaged results in the main text by reporting top-5 accuracy, top-1 accuracy, and macro-AUROC separately at each forecast horizon. These metrics assess complementary aspects of prediction: top-5 accuracy measures whether the recorded code appears among the five highest-ranked predictions, top-1 accuracy requires it to be the highest-ranked prediction, and macro-AUROC assesses discrimination across evaluated code classes.

\paragraph{Synthetic benchmarks.}
Figures~\ref{fig:horizon_top_5_syn}, \ref{fig:horizon_top_1_syn} and \ref{fig:horizon_auroc_syn} report results across the five controlled synthetic datasets and Synthea. Multi-marginal methods generally outperform adjacent CFM and Metric FM, while history conditioning provides additional gains that depend on the data-generating mechanism. The clearest benefits occur in Anthracycline and RRMS, where preceding observations provide information about cumulative exposure and relapse history. Improvements across both top-1 accuracy and macro-AUROC indicate that these benefits extend beyond the top-5 metric used in the main text.

The additional metrics also reveal settings in which history conditioning provides limited benefit. On Heart failure, the multi-marginal variants have similar top-1 accuracy, as expected for the Markovian disease mechanism, despite modest differences in top-5 accuracy. On Synthea, history conditioning yields little improvement in top-5 accuracy but notably improves macro-AUROC at longer horizons. Performance generally declines with increasing forecast duration, although this pattern is not universal across datasets and metrics.

\begin{figure}[h]
    \centering
    \includegraphics[width=\linewidth]{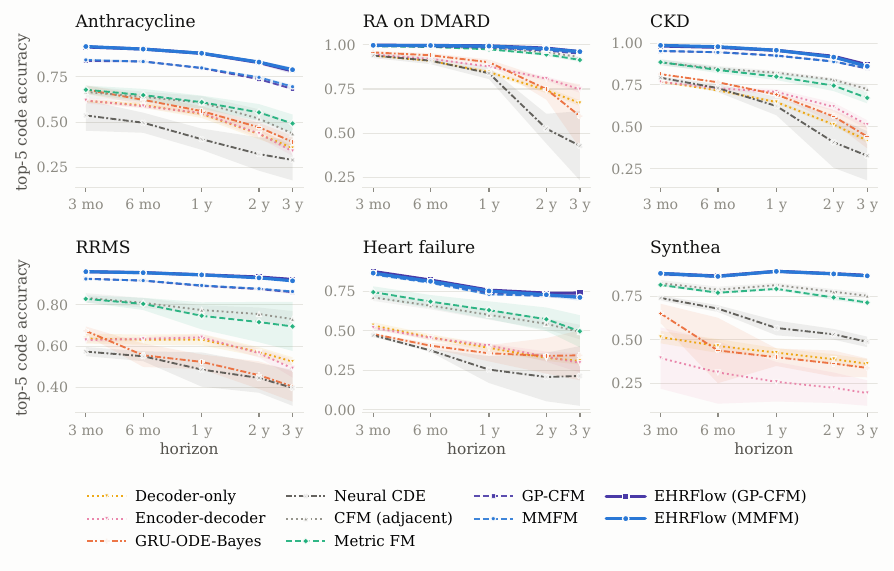}
    \caption{Clinical-code top-5 accuracy by forecast horizon across the five  synthetic datasets and Synthea. Lines show the mean across random seeds; bands indicate $\pm$ one standard deviation.}
    \label{fig:horizon_top_5_syn} 
\end{figure}

\begin{figure}[h]
    \centering
    \includegraphics[width=\linewidth]{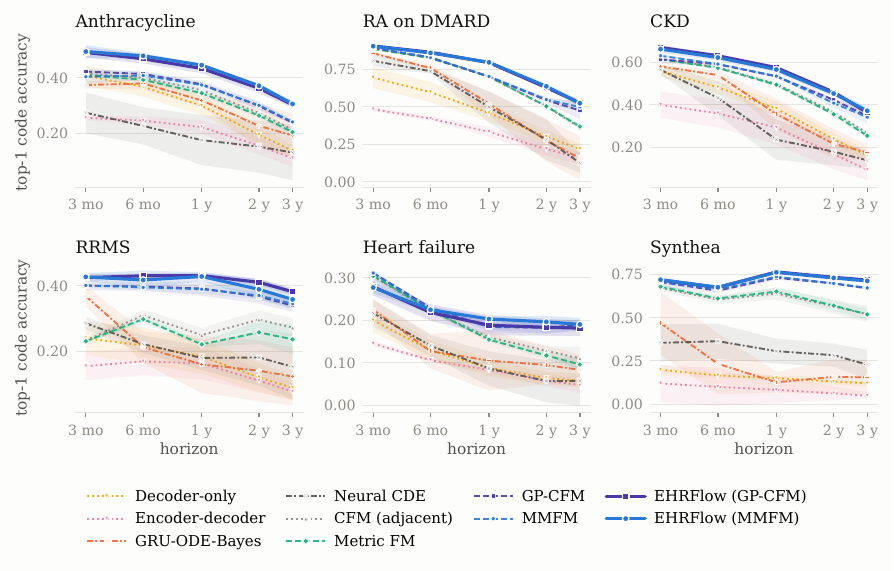}
    \caption{Clinical-code top-1 accuracy by forecast horizon across the five  synthetic datasets and Synthea. Lines show the mean across random seeds; bands indicate $\pm$ one standard deviation.}
    \label{fig:horizon_top_1_syn} 
\end{figure}

\newpage

\begin{figure}[h]
    \centering
    \includegraphics[width=\linewidth]{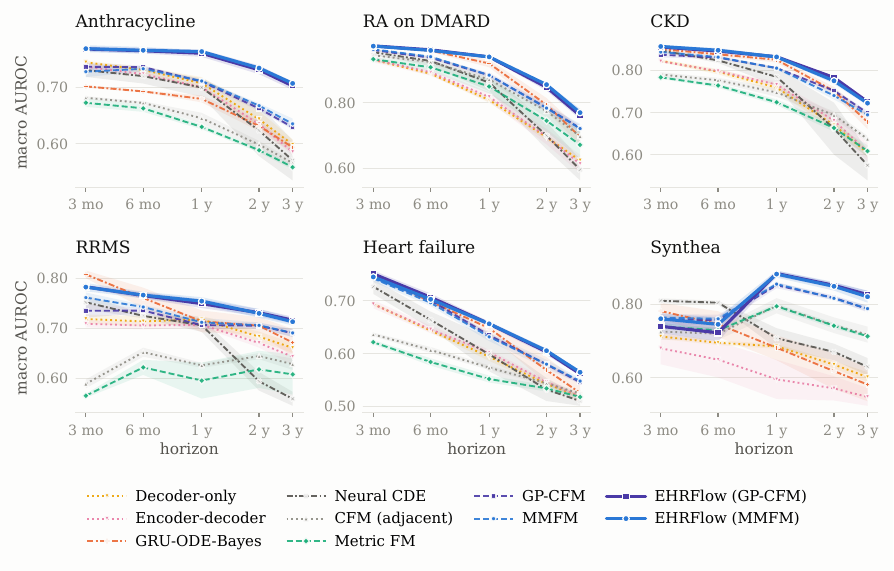}
    \caption{Clinical-code macro-AUROC by forecast horizon across the five  synthetic datasets and Synthea. Lines show the mean across random seeds; bands indicate $\pm$ one standard deviation.}
    \label{fig:horizon_auroc_syn} 
\end{figure}

\begin{figure}[h]
    \centering
    \includegraphics[width=\linewidth]{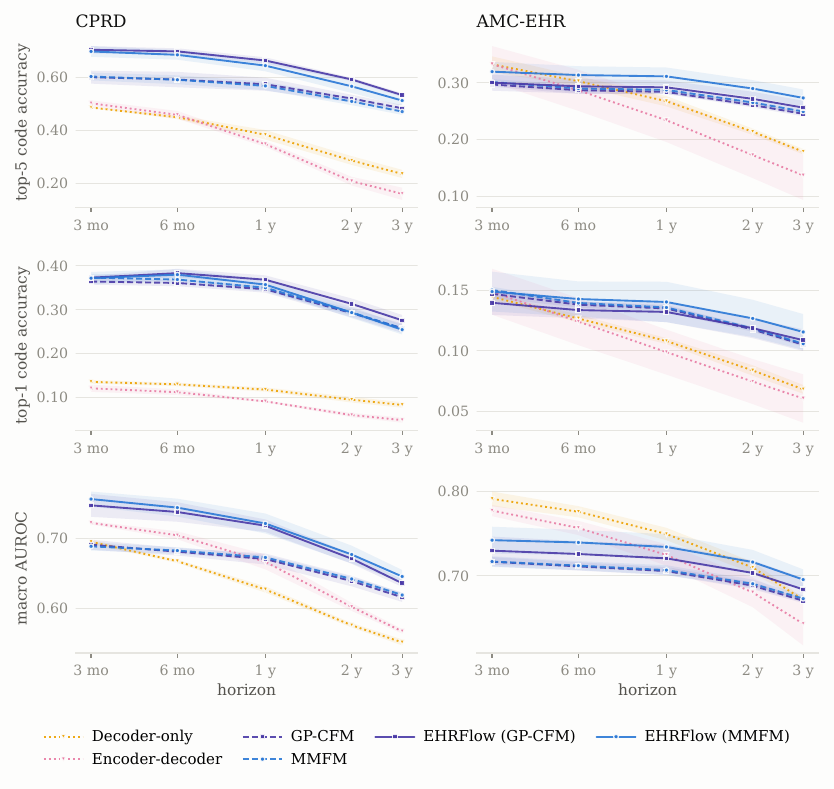}
    \caption{Clinical-code forecasting performance by horizon on CPRD (external Gold cohort) and AMC-EHR. Rows show top-5 accuracy, top-1 accuracy, and macro-AUROC. Lines show the mean across random seeds; shaded bands indicate $\pm$ one standard deviation.}
    \label{fig:horizon_real} 
    
\end{figure}

\paragraph{Real-world datasets.}
Figure~\ref{fig:horizon_real} reports all three metrics on CPRD and AMC-EHR. On CPRD, history conditioning produces clear improvements over the corresponding history-independent flow models in top-5 accuracy and macro-AUROC, while the differences in top-1 accuracy are smaller. The autoregressive baselines deteriorate more strongly with increasing horizon, particularly for top-5 accuracy. On AMC-EHR, the relative ranking depends more on the metric: autoregressive models remain competitive at short horizons and achieve higher short-horizon macro-AUROC, whereas \texttt{EHRFlow} (MMFM) performs best at longer horizons.

\subsection{Evidence for continuous latent dynamics}

Accurate predictions at the training horizons alone do not establish that a model captures useful latent dynamics between them. We therefore examine three complementary properties: recovery of the underlying simulated state, agreement between forecasts of the same time point obtained using different rollout durations, and predictive performance at horizons outside the training grid.

\paragraph{Recovery of the simulated latent state.}
Table~\ref{tab:latent_recovery} evaluates whether forecasted latent representations retain information about the simulator's underlying disease-severity state. We read out severity using a linear map fitted on encoded anchor states and measure mean-squared error against the ground-truth severity at the forecast horizon. Results are averaged across the 90-, 365-, and 1,095-day horizons and summarised by the median and standard deviation across seeds.

History conditioning reduces recovery error relative to the corresponding history-independent multi-marginal models across all five controlled datasets. For example, on RRMS, error decreases from 2.23 to 1.58 for GP-CFM and from 2.35 to 1.67 for MMFM. On CKD, the corresponding reductions are from 0.99 to 0.78 and from 1.01 to 0.73. These results suggest that the forecasting improvements are accompanied by better recovery of the simulated state, rather than being confined to clinical-code prediction.

\paragraph{Interpolation and extrapolation across forecast horizons.}
Table~\ref{tab:offgrid_accuracy} and Figure~\ref{fig:offgrid} evaluate forecasts at horizons not included in the training grid. We distinguish interior horizons of 270, 545, and 912 days from horizons outside the trained range: 45 days and 1,460 days. The latter horizons test predictions at shorter and longer durations than the nominal training horizons, respectively. Figure~\ref{fig:offgrid} places these evaluations alongside the trained horizons to show how performance changes with forecast duration.

At interior horizons, history conditioning retains clear benefits on Anthracycline, CKD, and RRMS. For example, Anthracycline top-5 accuracy increases from 0.767 to 0.851 for GP-CFM and from 0.774 to 0.853 for MMFM. These gains also persist in the aggregate evaluation outside the trained range. Improvements are less consistent on Heart failure and Synthea, and the history-conditioned variants do not outperform their counterparts in every setting. 

\paragraph{Agreement across rollout durations.}
Because the vector field is conditioned on the requested duration, forecasts of the same calendar time do not necessarily agree when obtained from rollouts of different lengths. Table~\ref{tab:trajectory_consistency_full} examines this directly. For each nested pair $h < H$, we compare a forecast integrated to day $h$ with the state at day $h$ along a rollout targeting day $H$, using the same anchor and historical context. We report the top-5 accuracy of each forecast and the fraction of codes shared between their top-5 prediction sets.

Across the evaluated datasets and horizon pairs, mean top-5 overlap is 93.9\% for \texttt{EHRFlow} (GP-CFM) and 93.1\% for \texttt{EHRFlow} (MMFM). Direct and nested forecasts also have similar mean accuracy: 0.920 versus 0.921 for GP-CFM and 0.919 versus 0.921 for MMFM. Agreement varies across datasets and horizon pairs, with lower overlap for some longer rollouts. 
\begin{table}[!htbp]
\centering
\caption{Median and s.d. of latent-state recovery on the synthetic benchmarks across seeds.}
\vspace{2pt}
\label{tab:latent_recovery}
\resizebox{\textwidth}{!}{%
\begin{tabular}{lccccc}
\toprule
 & Heart failure & RA on DMARD & CKD & RRMS & Anthracycline \\
\midrule
CFM (adjacent) & 1.48 $\pm$ 0.05 & 0.36 $\pm$ 0.04 & 1.14 $\pm$ 0.01 & 1.96 $\pm$ 0.06 & 1.07 $\pm$ 0.06 \\
Metric FM & 1.41 $\pm$ 0.08 & 0.34 $\pm$ 0.03 & 1.10 $\pm$ 0.06 & 2.06 $\pm$ 0.04 & 1.35 $\pm$ 0.23 \\
GP-CFM & 1.24 $\pm$ 0.06 & 0.26 $\pm$ 0.00 & 0.99 $\pm$ 0.05 & 2.23 $\pm$ 0.12 & 1.07 $\pm$ 0.05 \\
MMFM & 1.34 $\pm$ 0.02 & 0.27 $\pm$ 0.01 & 1.01 $\pm$ 0.06 & 2.35 $\pm$ 0.11 & 0.98 $\pm$ 0.01 \\
\midrule
\texttt{EHRFlow} (GP-CFM) & \textbf{1.18} $\pm$ 0.12 & \textbf{0.23} $\pm$ 0.00 & 0.78 $\pm$ 0.03 & \textbf{1.58} $\pm$ 0.06 & \textbf{0.81} $\pm$ 0.01 \\
\texttt{EHRFlow} (MMFM) & 1.24 $\pm$ 0.12 & 0.25 $\pm$ 0.01 & \textbf{0.73} $\pm$ 0.12 & 1.67 $\pm$ 0.08 & 0.94 $\pm$ 0.05 \\
\bottomrule
\end{tabular}}
\end{table}

\begin{table}[htbp]
\centering
\caption{Top-5 clinical-code accuracy at the off-grid horizons, averaged over the horizons of each block (mean $\pm$ std over seeds).}
\label{tab:offgrid_accuracy}
\resizebox{\textwidth}{!}{%
\begin{tabular}{lcccccc}
\toprule
 & Heart failure & RA on DMARD & CKD & RRMS & Anthracycline & Synthea \\
\midrule
\multicolumn{7}{l}{\textit{Interior off-grid horizons (270, 545, 912 d)}} \\
GRU-ODE-Bayes & 0.352 $\pm$ 0.091 & 0.813 $\pm$ 0.043 & 0.611 $\pm$ 0.025 & 0.495 $\pm$ 0.013 & 0.518 $\pm$ 0.020 & 0.302 $\pm$ 0.031 \\
Neural CDE & 0.267 $\pm$ 0.119 & 0.664 $\pm$ 0.066 & 0.492 $\pm$ 0.100 & 0.438 $\pm$ 0.055 & 0.386 $\pm$ 0.098 & 0.562 $\pm$ 0.032 \\
CFM (adjacent) & 0.573 $\pm$ 0.030 & 0.967 $\pm$ 0.003 & 0.797 $\pm$ 0.012 & 0.769 $\pm$ 0.029 & 0.557 $\pm$ 0.040 & 0.705 $\pm$ 0.012 \\
Metric FM & 0.599 $\pm$ 0.063 & 0.957 $\pm$ 0.006 & 0.769 $\pm$ 0.024 & 0.743 $\pm$ 0.079 & 0.583 $\pm$ 0.042 & 0.664 $\pm$ 0.018 \\
GP-CFM & 0.738 $\pm$ 0.006 & 0.974 $\pm$ 0.001 & 0.906 $\pm$ 0.004 & 0.887 $\pm$ 0.005 & 0.767 $\pm$ 0.003 & 0.813 $\pm$ 0.006 \\
MMFM & 0.735 $\pm$ 0.005 & 0.977 $\pm$ 0.002 & 0.905 $\pm$ 0.004 & 0.889 $\pm$ 0.004 & 0.774 $\pm$ 0.004 & \textbf{0.814} $\pm$ 0.003 \\
\midrule
\texttt{EHRFlow} (GP-CFM) & \textbf{0.756} $\pm$ 0.006 & \textbf{0.985} $\pm$ 0.002 & \textbf{0.934} $\pm$ 0.003 & \textbf{0.939} $\pm$ 0.005 & 0.851 $\pm$ 0.009 & 0.810 $\pm$ 0.005 \\
\texttt{EHRFlow} (MMFM) & 0.746 $\pm$ 0.008 & 0.984 $\pm$ 0.002 & 0.933 $\pm$ 0.005 & 0.937 $\pm$ 0.003 & \textbf{0.853} $\pm$ 0.007 & 0.806 $\pm$ 0.006 \\
\midrule
\multicolumn{7}{l}{\textit{Outside the trained range (45, 1460 d)}} \\
GRU-ODE-Bayes & 0.476 $\pm$ 0.100 & 0.708 $\pm$ 0.157 & 0.627 $\pm$ 0.031 & 0.579 $\pm$ 0.020 & 0.548 $\pm$ 0.003 & 0.521 $\pm$ 0.026 \\
Neural CDE & 0.407 $\pm$ 0.105 & 0.677 $\pm$ 0.131 & 0.585 $\pm$ 0.087 & 0.478 $\pm$ 0.047 & 0.414 $\pm$ 0.094 & 0.603 $\pm$ 0.039 \\
CFM (adjacent) & 0.586 $\pm$ 0.029 & 0.947 $\pm$ 0.008 & 0.772 $\pm$ 0.025 & 0.774 $\pm$ 0.027 & 0.534 $\pm$ 0.025 & 0.781 $\pm$ 0.010 \\
Metric FM & 0.600 $\pm$ 0.056 & 0.937 $\pm$ 0.009 & 0.730 $\pm$ 0.023 & 0.754 $\pm$ 0.070 & 0.578 $\pm$ 0.034 & 0.758 $\pm$ 0.008 \\
GP-CFM & 0.794 $\pm$ 0.004 & 0.963 $\pm$ 0.004 & 0.866 $\pm$ 0.008 & 0.879 $\pm$ 0.006 & 0.730 $\pm$ 0.013 & 0.855 $\pm$ 0.005 \\
MMFM & 0.793 $\pm$ 0.020 & 0.969 $\pm$ 0.003 & 0.872 $\pm$ 0.007 & 0.882 $\pm$ 0.007 & 0.744 $\pm$ 0.007 & 0.861 $\pm$ 0.008 \\
\midrule
\texttt{EHRFlow} (GP-CFM) & \textbf{0.817} $\pm$ 0.012 & \textbf{0.969} $\pm$ 0.007 & \textbf{0.889} $\pm$ 0.007 & \textbf{0.934} $\pm$ 0.004 & 0.829 $\pm$ 0.010 & \textbf{0.864} $\pm$ 0.004 \\
\texttt{EHRFlow} (MMFM) & 0.789 $\pm$ 0.024 & 0.968 $\pm$ 0.003 & 0.881 $\pm$ 0.014 & 0.932 $\pm$ 0.004 & \textbf{0.837} $\pm$ 0.011 & 0.862 $\pm$ 0.004 \\
\bottomrule
\end{tabular}}
\end{table}

\begin{figure}
    \centering
    \includegraphics[width=\linewidth]{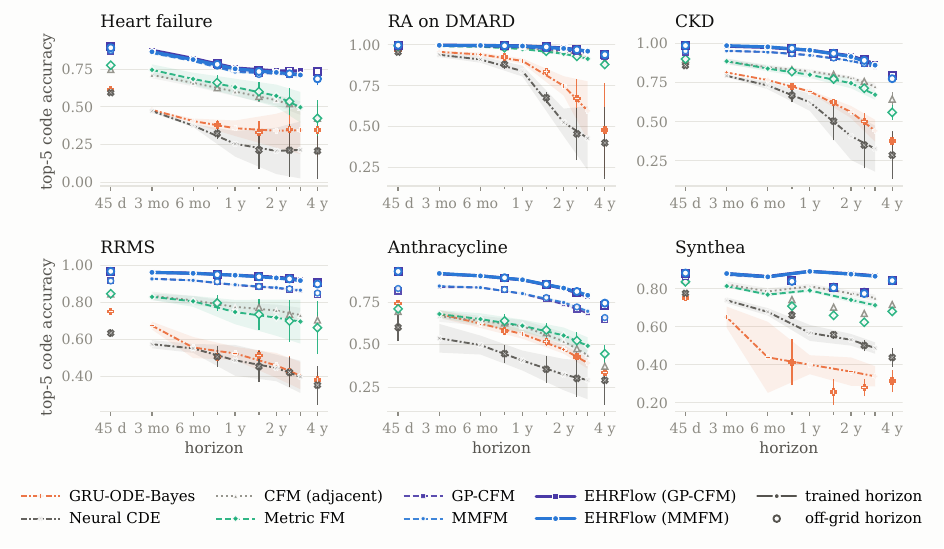}
    \caption{Clinical-code top-5 accuracy at trained and off-grid forecast horizons across the synthetic benchmarks. Off-grid evaluation tests interpolation between training horizons and extrapolation outside the training range. Results are averaged across seeds and shown with $\pm$ 1 s.d.}
    \label{fig:offgrid}
\end{figure}

\begin{table}[htbp]
\centering
\caption{Trajectory self-consistency per synthetic dataset and nested pair ($h$ read off a rollout to $H$). Direct: top-5 code accuracy of a rollout to day $h$; Nested: the same day read off the rollout to $H$; Overlap: shared fraction of the two top-5 sets. Mean $\pm$ s.d. over seeds.}
\label{tab:trajectory_consistency_full}
\resizebox{\textwidth}{!}{%
\begin{tabular}{lcccccc}
\toprule
 & \multicolumn{3}{c}{\texttt{EHRFlow} (GP-CFM)} & \multicolumn{3}{c}{\texttt{EHRFlow} (MMFM)} \\
\cmidrule(lr){2-4}\cmidrule(lr){5-7}
 & Direct & Nested & Overlap & Direct & Nested & Overlap \\
\midrule
\multicolumn{7}{l}{\textbf{Heart failure}} \\
\quad 90 d in 180 d & 0.872 $\pm$ 0.007 & 0.876 $\pm$ 0.008 & 0.952 $\pm$ 0.012 & 0.867 $\pm$ 0.012 & 0.870 $\pm$ 0.014 & 0.940 $\pm$ 0.015 \\
\quad 90 d in 365 d & 0.872 $\pm$ 0.007 & 0.874 $\pm$ 0.008 & 0.929 $\pm$ 0.017 & 0.867 $\pm$ 0.012 & 0.869 $\pm$ 0.009 & 0.913 $\pm$ 0.015 \\
\quad 180 d in 365 d & 0.811 $\pm$ 0.004 & 0.817 $\pm$ 0.005 & 0.936 $\pm$ 0.006 & 0.807 $\pm$ 0.020 & 0.813 $\pm$ 0.018 & 0.928 $\pm$ 0.018 \\
\quad 180 d in 730 d & 0.811 $\pm$ 0.004 & 0.820 $\pm$ 0.006 & 0.909 $\pm$ 0.015 & 0.807 $\pm$ 0.020 & 0.819 $\pm$ 0.006 & 0.896 $\pm$ 0.028 \\
\quad 365 d in 730 d & 0.743 $\pm$ 0.008 & 0.755 $\pm$ 0.009 & 0.915 $\pm$ 0.005 & 0.749 $\pm$ 0.019 & 0.759 $\pm$ 0.015 & 0.922 $\pm$ 0.013 \\
\quad 365 d in 1095 d & 0.743 $\pm$ 0.008 & 0.751 $\pm$ 0.011 & 0.876 $\pm$ 0.027 & 0.749 $\pm$ 0.019 & 0.760 $\pm$ 0.012 & 0.886 $\pm$ 0.027 \\
\quad \textit{Mean} & 0.809 $\pm$ 0.003 & 0.815 $\pm$ 0.006 & 0.920 $\pm$ 0.011 & 0.808 $\pm$ 0.017 & 0.815 $\pm$ 0.012 & 0.914 $\pm$ 0.018 \\
\midrule
\multicolumn{7}{l}{\textbf{RA on DMARD}} \\
\quad 90 d in 180 d & 0.998 $\pm$ 0.000 & 0.998 $\pm$ 0.000 & 0.962 $\pm$ 0.006 & 0.998 $\pm$ 0.001 & 0.998 $\pm$ 0.000 & 0.961 $\pm$ 0.010 \\
\quad 90 d in 365 d & 0.998 $\pm$ 0.000 & 0.997 $\pm$ 0.001 & 0.938 $\pm$ 0.010 & 0.998 $\pm$ 0.001 & 0.997 $\pm$ 0.001 & 0.935 $\pm$ 0.017 \\
\quad 180 d in 365 d & 0.997 $\pm$ 0.001 & 0.997 $\pm$ 0.000 & 0.955 $\pm$ 0.009 & 0.997 $\pm$ 0.001 & 0.996 $\pm$ 0.000 & 0.954 $\pm$ 0.015 \\
\quad 180 d in 730 d & 0.997 $\pm$ 0.001 & 0.996 $\pm$ 0.001 & 0.932 $\pm$ 0.011 & 0.997 $\pm$ 0.001 & 0.995 $\pm$ 0.001 & 0.930 $\pm$ 0.020 \\
\quad 365 d in 730 d & 0.994 $\pm$ 0.001 & 0.993 $\pm$ 0.001 & 0.951 $\pm$ 0.012 & 0.994 $\pm$ 0.001 & 0.992 $\pm$ 0.001 & 0.955 $\pm$ 0.012 \\
\quad 365 d in 1095 d & 0.994 $\pm$ 0.001 & 0.993 $\pm$ 0.000 & 0.940 $\pm$ 0.012 & 0.994 $\pm$ 0.001 & 0.991 $\pm$ 0.001 & 0.942 $\pm$ 0.015 \\
\quad \textit{Mean} & 0.996 $\pm$ 0.000 & 0.996 $\pm$ 0.000 & 0.946 $\pm$ 0.008 & 0.996 $\pm$ 0.001 & 0.995 $\pm$ 0.001 & 0.946 $\pm$ 0.014 \\
\midrule
\multicolumn{7}{l}{\textbf{CKD}} \\
\quad 90 d in 180 d & 0.984 $\pm$ 0.002 & 0.987 $\pm$ 0.001 & 0.960 $\pm$ 0.020 & 0.986 $\pm$ 0.002 & 0.988 $\pm$ 0.001 & 0.964 $\pm$ 0.005 \\
\quad 90 d in 365 d & 0.984 $\pm$ 0.002 & 0.983 $\pm$ 0.003 & 0.946 $\pm$ 0.028 & 0.986 $\pm$ 0.002 & 0.984 $\pm$ 0.001 & 0.946 $\pm$ 0.009 \\
\quad 180 d in 365 d & 0.976 $\pm$ 0.003 & 0.979 $\pm$ 0.001 & 0.964 $\pm$ 0.019 & 0.978 $\pm$ 0.004 & 0.980 $\pm$ 0.001 & 0.963 $\pm$ 0.006 \\
\quad 180 d in 730 d & 0.976 $\pm$ 0.003 & 0.976 $\pm$ 0.002 & 0.949 $\pm$ 0.029 & 0.978 $\pm$ 0.004 & 0.977 $\pm$ 0.002 & 0.944 $\pm$ 0.008 \\
\quad 365 d in 730 d & 0.957 $\pm$ 0.004 & 0.961 $\pm$ 0.002 & 0.961 $\pm$ 0.010 & 0.957 $\pm$ 0.003 & 0.962 $\pm$ 0.003 & 0.953 $\pm$ 0.011 \\
\quad 365 d in 1095 d & 0.957 $\pm$ 0.004 & 0.959 $\pm$ 0.002 & 0.948 $\pm$ 0.017 & 0.957 $\pm$ 0.003 & 0.960 $\pm$ 0.003 & 0.935 $\pm$ 0.013 \\
\quad \textit{Mean} & 0.972 $\pm$ 0.002 & 0.974 $\pm$ 0.001 & 0.955 $\pm$ 0.020 & 0.974 $\pm$ 0.003 & 0.975 $\pm$ 0.001 & 0.951 $\pm$ 0.006 \\
\midrule
\multicolumn{7}{l}{\textbf{RRMS}} \\
\quad 90 d in 180 d & 0.963 $\pm$ 0.003 & 0.968 $\pm$ 0.002 & 0.962 $\pm$ 0.010 & 0.961 $\pm$ 0.004 & 0.965 $\pm$ 0.004 & 0.947 $\pm$ 0.012 \\
\quad 90 d in 365 d & 0.963 $\pm$ 0.003 & 0.962 $\pm$ 0.005 & 0.942 $\pm$ 0.010 & 0.961 $\pm$ 0.004 & 0.962 $\pm$ 0.006 & 0.925 $\pm$ 0.015 \\
\quad 180 d in 365 d & 0.959 $\pm$ 0.002 & 0.962 $\pm$ 0.003 & 0.969 $\pm$ 0.003 & 0.956 $\pm$ 0.006 & 0.958 $\pm$ 0.004 & 0.958 $\pm$ 0.012 \\
\quad 180 d in 730 d & 0.959 $\pm$ 0.002 & 0.955 $\pm$ 0.003 & 0.945 $\pm$ 0.005 & 0.956 $\pm$ 0.006 & 0.956 $\pm$ 0.005 & 0.933 $\pm$ 0.022 \\
\quad 365 d in 730 d & 0.948 $\pm$ 0.001 & 0.948 $\pm$ 0.002 & 0.971 $\pm$ 0.004 & 0.942 $\pm$ 0.009 & 0.945 $\pm$ 0.005 & 0.956 $\pm$ 0.018 \\
\quad 365 d in 1095 d & 0.948 $\pm$ 0.001 & 0.945 $\pm$ 0.003 & 0.955 $\pm$ 0.007 & 0.942 $\pm$ 0.009 & 0.943 $\pm$ 0.007 & 0.938 $\pm$ 0.029 \\
\quad \textit{Mean} & 0.957 $\pm$ 0.001 & 0.957 $\pm$ 0.002 & 0.957 $\pm$ 0.005 & 0.953 $\pm$ 0.006 & 0.955 $\pm$ 0.004 & 0.943 $\pm$ 0.017 \\
\midrule
\multicolumn{7}{l}{\textbf{Anthracycline}} \\
\quad 90 d in 180 d & 0.915 $\pm$ 0.006 & 0.917 $\pm$ 0.006 & 0.940 $\pm$ 0.010 & 0.920 $\pm$ 0.005 & 0.925 $\pm$ 0.003 & 0.941 $\pm$ 0.008 \\
\quad 90 d in 365 d & 0.915 $\pm$ 0.006 & 0.902 $\pm$ 0.004 & 0.908 $\pm$ 0.014 & 0.920 $\pm$ 0.005 & 0.909 $\pm$ 0.005 & 0.906 $\pm$ 0.011 \\
\quad 180 d in 365 d & 0.904 $\pm$ 0.006 & 0.905 $\pm$ 0.007 & 0.937 $\pm$ 0.012 & 0.899 $\pm$ 0.007 & 0.906 $\pm$ 0.005 & 0.932 $\pm$ 0.012 \\
\quad 180 d in 730 d & 0.904 $\pm$ 0.006 & 0.890 $\pm$ 0.006 & 0.904 $\pm$ 0.017 & 0.899 $\pm$ 0.007 & 0.896 $\pm$ 0.007 & 0.897 $\pm$ 0.012 \\
\quad 365 d in 730 d & 0.879 $\pm$ 0.009 & 0.881 $\pm$ 0.012 & 0.933 $\pm$ 0.010 & 0.864 $\pm$ 0.009 & 0.879 $\pm$ 0.007 & 0.919 $\pm$ 0.010 \\
\quad 365 d in 1095 d & 0.879 $\pm$ 0.009 & 0.870 $\pm$ 0.011 & 0.909 $\pm$ 0.017 & 0.864 $\pm$ 0.009 & 0.873 $\pm$ 0.011 & 0.890 $\pm$ 0.011 \\
\quad \textit{Mean} & 0.900 $\pm$ 0.006 & 0.894 $\pm$ 0.007 & 0.922 $\pm$ 0.012 & 0.894 $\pm$ 0.007 & 0.898 $\pm$ 0.005 & 0.914 $\pm$ 0.010 \\
\midrule
\multicolumn{7}{l}{\textbf{Synthea}} \\
\quad 90 d in 180 d & 0.889 $\pm$ 0.004 & 0.888 $\pm$ 0.003 & 0.945 $\pm$ 0.020 & 0.886 $\pm$ 0.008 & 0.887 $\pm$ 0.006 & 0.935 $\pm$ 0.014 \\
\quad 90 d in 365 d & 0.889 $\pm$ 0.004 & 0.887 $\pm$ 0.003 & 0.925 $\pm$ 0.021 & 0.886 $\pm$ 0.008 & 0.888 $\pm$ 0.007 & 0.910 $\pm$ 0.021 \\
\quad 180 d in 365 d & 0.876 $\pm$ 0.003 & 0.878 $\pm$ 0.004 & 0.945 $\pm$ 0.017 & 0.873 $\pm$ 0.003 & 0.877 $\pm$ 0.004 & 0.935 $\pm$ 0.010 \\
\quad 180 d in 730 d & 0.876 $\pm$ 0.003 & 0.877 $\pm$ 0.003 & 0.923 $\pm$ 0.020 & 0.873 $\pm$ 0.003 & 0.877 $\pm$ 0.004 & 0.911 $\pm$ 0.018 \\
\quad 365 d in 730 d & 0.900 $\pm$ 0.004 & 0.900 $\pm$ 0.005 & 0.936 $\pm$ 0.015 & 0.900 $\pm$ 0.004 & 0.900 $\pm$ 0.004 & 0.921 $\pm$ 0.007 \\
\quad 365 d in 1095 d & 0.900 $\pm$ 0.004 & 0.900 $\pm$ 0.005 & 0.921 $\pm$ 0.013 & 0.900 $\pm$ 0.004 & 0.900 $\pm$ 0.005 & 0.905 $\pm$ 0.009 \\
\quad \textit{Mean} & 0.888 $\pm$ 0.004 & 0.888 $\pm$ 0.004 & 0.933 $\pm$ 0.017 & 0.886 $\pm$ 0.005 & 0.888 $\pm$ 0.005 & 0.920 $\pm$ 0.012 \\
\midrule
\multicolumn{7}{l}{\textbf{Average over datasets}} \\
\quad 90 d in 180 d & 0.937 $\pm$ 0.002 & 0.939 $\pm$ 0.001 & 0.954 $\pm$ 0.007 & 0.936 $\pm$ 0.003 & 0.939 $\pm$ 0.003 & 0.948 $\pm$ 0.004 \\
\quad 90 d in 365 d & 0.937 $\pm$ 0.002 & 0.934 $\pm$ 0.003 & 0.931 $\pm$ 0.008 & 0.936 $\pm$ 0.003 & 0.935 $\pm$ 0.002 & 0.922 $\pm$ 0.004 \\
\quad 180 d in 365 d & 0.921 $\pm$ 0.001 & 0.923 $\pm$ 0.001 & 0.951 $\pm$ 0.006 & 0.918 $\pm$ 0.004 & 0.922 $\pm$ 0.003 & 0.945 $\pm$ 0.004 \\
\quad 180 d in 730 d & 0.921 $\pm$ 0.001 & 0.919 $\pm$ 0.002 & 0.927 $\pm$ 0.007 & 0.918 $\pm$ 0.004 & 0.920 $\pm$ 0.002 & 0.919 $\pm$ 0.006 \\
\quad 365 d in 730 d & 0.904 $\pm$ 0.002 & 0.906 $\pm$ 0.002 & 0.945 $\pm$ 0.004 & 0.901 $\pm$ 0.004 & 0.906 $\pm$ 0.003 & 0.938 $\pm$ 0.004 \\
\quad 365 d in 1095 d & 0.904 $\pm$ 0.002 & 0.903 $\pm$ 0.002 & 0.925 $\pm$ 0.007 & 0.901 $\pm$ 0.004 & 0.904 $\pm$ 0.003 & 0.916 $\pm$ 0.007 \\
\quad \textit{Mean} & 0.920 $\pm$ 0.000 & 0.921 $\pm$ 0.001 & 0.939 $\pm$ 0.006 & 0.919 $\pm$ 0.003 & 0.921 $\pm$ 0.002 & 0.931 $\pm$ 0.005 \\
\bottomrule
\end{tabular}}
\end{table}

\subsection{History-conditioning ablations}

We extend the aggregate ablations in the main text with dataset-level results and an evaluation of how performance changes with the amount of historical context.

\paragraph{Dataset-level ablations.}
Table~\ref{tab:ablations_by_dataset} separates the effects of model capacity, history availability, optimisation, and the way historical information enters the flow. Matching the vector-field parameter count largely preserves performance, whereas replacing the history representation with zeros returns results close to the history-independent models. This pattern is particularly clear on Anthracycline and RRMS, supporting the interpretation that the gains arise from informative historical context rather than additional vector-field capacity.

Using a learned history representation as the flow source performs less well than conditioning a flow initialised from the encoded anchor state. Jointly training the history encoder with the flow also generally underperforms pretraining and freezing it under the evaluated optimisation settings. Fully end-to-end training likewise underperforms the staged procedure across all datasets for both interpolant families, with increased variability on several benchmarks. Removing the auxiliary decoder loss substantially reduces accuracy across datasets. This ablation removes observation-space supervision and leaves the decoder frozen, so it establishes the importance of this combined training choice rather than isolating either component individually. Finally, adding history to adjacent CFM improves average performance but remains below the multi-marginal variants.
\begin{table}[!htb]
\centering
\caption{History-conditioning ablations per dataset: top-5 clinical-code accuracy averaged over horizons (mean $\pm$ std over seeds).}
\label{tab:ablations_by_dataset}
\resizebox{\textwidth}{!}{%
\begin{tabular}{lcccccc}
\toprule
 & Anthracycline & RA on DMARD & CKD & RRMS & Heart failure & Synthea \\
\midrule
\multicolumn{7}{l}{\textit{MMFM path}} \\
\texttt{EHRFlow} & \textbf{0.865} $\pm$ 0.007 & 0.985 $\pm$ 0.002 & 0.940 $\pm$ 0.004 & 0.942 $\pm$ 0.004 & 0.775 $\pm$ 0.009 & 0.875 $\pm$ 0.003 \\
\texttt{EHRFlow} param. matched & 0.858 $\pm$ 0.004 & 0.986 $\pm$ 0.001 & 0.938 $\pm$ 0.005 & 0.940 $\pm$ 0.006 & 0.784 $\pm$ 0.010 & 0.877 $\pm$ 0.004 \\
\midrule
\texttt{EHRFlow}, zero context & 0.783 $\pm$ 0.005 & 0.979 $\pm$ 0.002 & 0.910 $\pm$ 0.004 & 0.898 $\pm$ 0.003 & 0.766 $\pm$ 0.011 & 0.876 $\pm$ 0.002 \\
\texttt{EHRFlow}, end-to-end GRU & 0.807 $\pm$ 0.014 & 0.982 $\pm$ 0.002 & 0.926 $\pm$ 0.007 & 0.919 $\pm$ 0.005 & 0.757 $\pm$ 0.010 & 0.874 $\pm$ 0.003 \\
\texttt{EHRFlow}, fully end-to-end & 0.767 $\pm$ 0.021 & 0.981 $\pm$ 0.002 & 0.919 $\pm$ 0.006 & 0.911 $\pm$ 0.011 & 0.748 $\pm$ 0.009 & 0.861 $\pm$ 0.008 \\
\texttt{EHRFlow}, no decoder loss & 0.460 $\pm$ 0.023 & 0.834 $\pm$ 0.015 & 0.691 $\pm$ 0.036 & 0.475 $\pm$ 0.027 & 0.474 $\pm$ 0.025 & 0.703 $\pm$ 0.012 \\
History as source & 0.780 $\pm$ 0.007 & 0.978 $\pm$ 0.002 & 0.910 $\pm$ 0.003 & 0.892 $\pm$ 0.009 & 0.763 $\pm$ 0.004 & 0.878 $\pm$ 0.003 \\
History as source + context & 0.773 $\pm$ 0.004 & 0.979 $\pm$ 0.002 & 0.905 $\pm$ 0.006 & 0.886 $\pm$ 0.008 & 0.770 $\pm$ 0.010 & 0.877 $\pm$ 0.004 \\
No history & 0.785 $\pm$ 0.003 & 0.980 $\pm$ 0.001 & 0.913 $\pm$ 0.004 & 0.896 $\pm$ 0.003 & 0.767 $\pm$ 0.004 & 0.874 $\pm$ 0.003 \\
\midrule
\multicolumn{7}{l}{\textit{GP-CFM path}} \\
\texttt{EHRFlow} & 0.863 $\pm$ 0.007 & \textbf{0.986} $\pm$ 0.002 & 0.941 $\pm$ 0.001 & 0.944 $\pm$ 0.005 & \textbf{0.787} $\pm$ 0.004 & 0.875 $\pm$ 0.003 \\
\texttt{EHRFlow} param. matched & 0.863 $\pm$ 0.003 & 0.984 $\pm$ 0.002 & \textbf{0.941} $\pm$ 0.003 & \textbf{0.944} $\pm$ 0.000 & 0.772 $\pm$ 0.010 & 0.875 $\pm$ 0.002 \\
\midrule
\texttt{EHRFlow}, zero context & 0.780 $\pm$ 0.012 & 0.980 $\pm$ 0.002 & 0.913 $\pm$ 0.002 & 0.894 $\pm$ 0.002 & 0.768 $\pm$ 0.007 & 0.873 $\pm$ 0.002 \\
\texttt{EHRFlow}, end-to-end GRU & 0.793 $\pm$ 0.019 & 0.979 $\pm$ 0.002 & 0.919 $\pm$ 0.011 & 0.913 $\pm$ 0.009 & 0.761 $\pm$ 0.009 & 0.871 $\pm$ 0.005 \\
\texttt{EHRFlow}, fully end-to-end & 0.805 $\pm$ 0.021 & 0.983 $\pm$ 0.003 & 0.926 $\pm$ 0.006 & 0.909 $\pm$ 0.021 & 0.759 $\pm$ 0.015 & 0.874 $\pm$ 0.003 \\
\texttt{EHRFlow}, no decoder loss & 0.492 $\pm$ 0.033 & 0.829 $\pm$ 0.006 & 0.666 $\pm$ 0.028 & 0.453 $\pm$ 0.042 & 0.465 $\pm$ 0.043 & 0.690 $\pm$ 0.028 \\
History as source & 0.775 $\pm$ 0.006 & 0.976 $\pm$ 0.001 & 0.908 $\pm$ 0.002 & 0.890 $\pm$ 0.009 & 0.764 $\pm$ 0.005 & \textbf{0.879} $\pm$ 0.003 \\
History as source + context & 0.775 $\pm$ 0.014 & 0.975 $\pm$ 0.003 & 0.899 $\pm$ 0.017 & 0.886 $\pm$ 0.006 & 0.772 $\pm$ 0.011 & 0.877 $\pm$ 0.003 \\
No history & 0.779 $\pm$ 0.004 & 0.977 $\pm$ 0.001 & 0.912 $\pm$ 0.003 & 0.895 $\pm$ 0.005 & 0.770 $\pm$ 0.005 & 0.875 $\pm$ 0.004 \\
\midrule
\multicolumn{7}{l}{\textit{CFM (linear)}} \\
CFM (adjacent) + history & 0.655 $\pm$ 0.013 & 0.957 $\pm$ 0.016 & 0.815 $\pm$ 0.026 & 0.834 $\pm$ 0.021 & 0.599 $\pm$ 0.040 & 0.799 $\pm$ 0.026 \\
CFM (adjacent) & 0.574 $\pm$ 0.035 & 0.971 $\pm$ 0.003 & 0.812 $\pm$ 0.014 & 0.781 $\pm$ 0.024 & 0.602 $\pm$ 0.026 & 0.790 $\pm$ 0.014 \\
CFM (MMFM windows) + history & 0.743 $\pm$ 0.020 & 0.963 $\pm$ 0.009 & 0.871 $\pm$ 0.021 & 0.860 $\pm$ 0.021 & 0.585 $\pm$ 0.045 & 0.830 $\pm$ 0.013 \\
\bottomrule
\end{tabular}}
\end{table}

\paragraph{Sensitivity to history-window length.}
Figure~\ref{fig:history_window} varies the number of day-states $K$ available to the history encoder and reports top-5 accuracy averaged across forecast horizons. On Anthracycline and RRMS, performance improves as historical observations are added and then broadly plateaus. CKD shows an earlier improvement followed by relatively little sensitivity to longer windows. In contrast, RA on DMARD and Synthea vary little over the evaluated window lengths, while Heart failure shows modest, non-monotonic changes. The similar patterns across the two interpolant families suggest that sensitivity to context length is primarily associated with the available historical information rather than the particular path construction. The default window of $K=32$ lies within the region of broadly stable performance across these benchmarks. 

\begin{figure}[!htb]
    \centering
    \includegraphics[width=\linewidth]{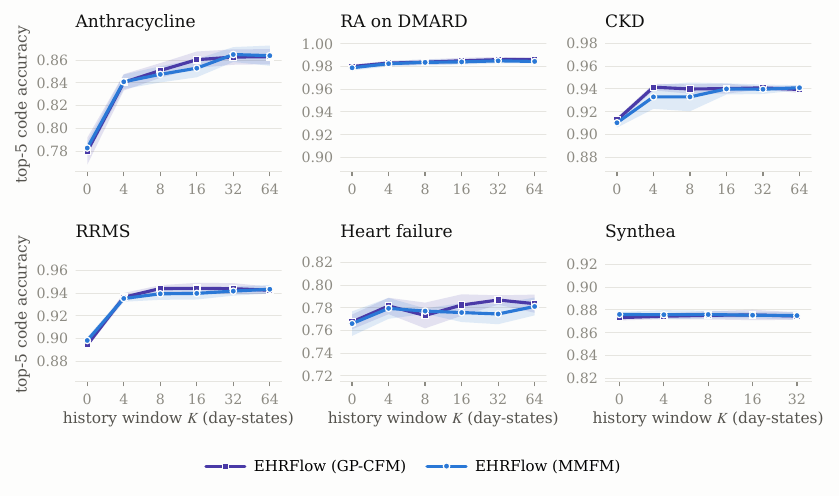}
    \caption{Top-5 clinical-code accuracy, averaged across forecast horizons, is shown for each dataset as the number of day-states $K$ available to the history encoder varies. Lines show the mean across random seeds; shaded bands indicate $\pm$ one standard deviation.}
    \label{fig:history_window}
\end{figure}

\newpage

\section{Experimental details}
\label{app:experimental_details}
\subsection{Synthetic datasets}
\label{app:synthetic}
To evaluate the benefits of history conditioning in a controlled environment with a ground truth latent trajectory, we constructed five datasets with different history mechanisms. Each dataset provides daily simulation of 20{,}000 patients for up to five years with no mortality events. Each patient $i$ carries a two-dimensional latent state
\begin{equation}
z_{i,t} = \big(z^{(s)}_{i,t},\, z^{(d)}_{i,t}\big),
\end{equation}
which defines a disease-severity axis and a disease-domain axis, together with an unobserved
scalar baseline $b_i\sim\mathcal{N}(0,1)$ and initial state
$z^{(s)}_{i,0}\sim\mathcal{N}(0.25\,b_i,\,0.35^2)$,
$z^{(d)}_{i,0}\sim\mathcal{N}(0,\,0.25^2)$. To simulate patient evolution, we perform an Euler--Maruyama step
\begin{equation}
z_{i,t+1}
=
\operatorname{clip}\!\left(
z_{i,t}
+ \kappa\, f(z_{i,t},a_{i,t},b_i)\,\Delta t
+ \eta\, \sigma(z_{i,t})\,\xi_{i,t}\sqrt{\Delta t},\;
-3,\, 6
\right),
\qquad
\xi_{i,t}\sim\mathcal{N}(0,I_2),
\end{equation}
with $\Delta t = 1/365.25$ years, treatment indicator $a_{i,t}\in\{0,1\}$, and
dataset-level multipliers $\kappa$ and $\eta$ on the whole drift and diffusion (default 1). The shared drift and diffusion are
\begin{align}
f^{(s)} &= r_i\,(0.35 + 0.45\,b_i)\,s_i + 0.25\,z^{(d)} - \tau\, a, \\
f^{(d)} &= -0.25\,z^{(d)} + 0.12\,z^{(s)} + 0.05\,b_i, \\
\sigma(z) &= \operatorname{diag}\!\big(0.08 + 0.03\max(z^{(s)},0),\; 0.06 + 0.02\,|z^{(d)}|\big),
\end{align}
where $r_i$ is the patient-specific progression rate (1 except in CKD),
$s_i$ is a fixed per-patient progression sign drawn once at baseline
($s_i=+1$ with probability $0.35$, progressing; $s_i=-0.6$ otherwise,
remitting), and $\tau=0.25$ is the instantaneous treatment effect.

\paragraph{Observation process.}
We model visits with  probability $\min(0.95,\, 10\,\Delta t)$. A visit emits an outpatient encounter, a diagnosis code and two laboratory values.
The diagnosis is read off a $5\times 3$ grid over the latent state: the
severity bin is the number of edges in $\{-0.5,0.5,1.5,2.5\}$ that lie at or
below $z^{(s)}$ and the domain $\in\{\text{renal},\text{cardiac},\text{metabolic}\}$
is read off $z^{(d)}$ with edges $\{-0.3, 0.3\}$, giving 15 diagnosis codes.
The laboratory values are $50 + 35\,z^{(s)} + \mathcal{N}(0,8^2)$ (severity
marker) and $1 + 0.25\,z^{(d)} + 0.15\,a + \mathcal{N}(0,0.08^2)$ (renal
marker). Treatment status is redrawn every 30 days as
$a\sim\operatorname{Bernoulli}\big(\operatorname{sigmoid}(-2 + 0.25\,z^{(s)} + 1.5\,a + \delta\,\mathbb{1}[a=0])\big)$,
where $\delta$ is a dataset-level start-logit shift (0 unless stated), so
patients both start and stop treatment; each start and stop is written as a
medication event.

Below, we describe the mechanisms used to introduce dependence on patient history. These synthetic datasets are inspired by broad clinical patterns, such as cumulative treatment exposure and delayed treatment response, but serve as simplified illustrations of history-dependent dynamics rather than validated mechanistic models of disease progression or treatment effects in real patients.

\paragraph{Heart failure: Markov dynamics.}
Severity follows a five-level ladder with set-points $\mu_k = -1 + k$,
$k\in\{0,\ldots,4\}$; the initial level is read off $z^{(s)}_{0}$ with the
diagnosis-grid edges. On each day, with probability $1/100$, the level moves to
an adjacent level, upward with probability $0.7$ and downward with probability
$0.3$, reflecting at the boundaries. 

\paragraph{RA on DMARD: delayed treatment effect.}
Treatment initiation at $t_s$ is observed immediately (medication event) and is
absorbing, but the latent responds only inside a lagged window,
\begin{equation}
a^{\mathrm{eff}}_t=\mathbb{1}\!\left[t_s+270 \le t < t_s+670\right],
\end{equation}
during which the treatment term in $f^{(s)}$ is $-3.5\,a^{\mathrm{eff}}_t$;
outside it the treatment has no effect on the latent (the renal marker still
carries the $0.15\,a$ term). This dataset uses $\kappa=0.3$, $\eta=0.7$ and a
start-logit shift $\delta=-1.5$.

\paragraph{CKD: patient-specific progression rate.}
Each patient has a fixed unobserved rate
\begin{equation}
r_i=\operatorname{clip}\!\left(\operatorname{LogNormal}\!\left(-\tfrac{1}{2}\,0.9^2,\,0.9\right),\;\tfrac14,\;4\right),
\end{equation}
which multiplies only the progression term $(0.35+0.45\,b_i)\,s_i$ of
$f^{(s)}$, not the coupling or treatment terms. This dataset uses
$\kappa=0.9$, $\eta=0.7$. 

\paragraph{RRMS: relapse history.}
Patients sit at a baseline set-point $\mu_i^{\mathrm{base}}\in\{-1,0,1\}$
(chosen from the initial state) and experience 90-day flares during which the
set-point rises by 2 severity bins. The first onset is uniform on the first
270 days; subsequent onsets form a renewal process with
\begin{equation}
T_{k+1}-T_k \;=\; \max\!\big(91,\ \operatorname{round}(270\cdot G_k)\big)\ \text{days},\qquad
G_k\sim\Gamma(\text{shape}=64,\ \text{scale}=1/64),
\end{equation}
i.e.\ mean 270 days and coefficient of variation $1/\sqrt{64}=0.125$. 

\paragraph{Anthracycline: cumulative exposure.}
Thirty-day treatment episodes start with a per-patient daily hazard
$h_i = \rho_i/180$, $\rho_i=\operatorname{clip}(\operatorname{LogNormal}(-\tfrac12 0.7^2, 0.7), \tfrac14, 4)$,
while no episode is active, and each start and stop is written as a medication alongside the shared
treatment review above. Exposure accumulates as
\begin{equation}
E_t=\sum_{u\le t}\mathbb{1}[\text{exposed at }u],
\end{equation}
and once $E_t \ge 180$ days the set-point rises permanently by 2 severity bins,
\begin{equation}
\mu_t=\mu_i^{\mathrm{base}} + 2\cdot\mathbb{1}[E_t\ge 180],
\end{equation}
with $\mu_i^{\mathrm{base}}\in\{-1,0,1\}$ as in RRMS.

\begin{table}[t]
\vspace{-12pt}
\centering
\caption{Synthetic datasets and the source of history dependence. Event and
code counts are those of the processed data the models are trained on.}
\vspace{2pt}
\label{tab:synthetic-datasets}
\small
\begin{tabular}{lrrrl}
\toprule
Dataset & Patients & Events & Codes & Predictive information \\
\midrule
Heart failure & 20{,}000 & 5{,}311{,}935 & 20 & Current state only  \\
RA on DMARD & 20{,}000 & 5{,}013{,}692 & 18 & Time since treatment initiation \\
CKD & 20{,}000 & 5{,}234{,}333 & 20 & Historical progression rate \\
RRMS & 20{,}000 & 5{,}264{,}932 & 20 & Time since previous relapse \\
Anthracycline & 20{,}000 & 5{,}587{,}408 & 22 & Cumulative treatment exposure \\
Synthea & 23{,}072 & 20{,}586{,}237 & 1{,}893 & Rule-based longitudinal disease history \\
\bottomrule
\end{tabular}
\vspace{-12pt}
\end{table}
\subsection{Synthea.}
For a more realistic patient simulator we also generated a cohort of 23{,}072 patients with Synthea v3.4.0 \citep{Walonoski2017}.  Unlike the controlled datasets, Synthea is produced by rule-based clinical state machines rather than our latent simulator. We retain conditions, hospitalisations (inpatient, emergency and
urgent-care encounters), observations (laboratory results, vital signs and
survey items), medication starts and stops, and deaths; all other encounter
classes (ambulatory, wellness, outpatient, home, virtual) are dropped. A summary of the different datasets is provided in Table~\ref{tab:synthetic-datasets}.

\subsection{Real-world datasets}

\paragraph{CPRD.}
We use a heart-failure cohort from the UK Clinical Practice Research Datalink
\citep{Herrett2015} covering both the Aurum and Gold primary-care databases:
868{,}691 patients (Aurum 815{,}092, Gold 53{,}599) with heart-failure index
dates between 2006 and 2023, and 650{,}147{,}233 events. Event sources are multimorbidity onsets, laboratory
tests, medications (drug and drug-class codes), BMI, blood pressure, smoking
and alcohol status, frailty scores, NYHA class, heart-failure phenotype,
A\&E visits, hospital admissions and admission diagnoses, and mortality, plus
three static demographic rows per patient (sex, ethnicity, deprivation). We train on Aurum, hold out a random 15\% of Aurum patients for
validation and epoch selection, and report model performance on all 53{,}599 Gold patients
(35{,}758{,}467 events) as an external test set. This study was conducted in accordance with the Declaration of Helsinki
and was approved through CPRD's Research Data Governance (RDG) process
(protocol 23\_002943).

\paragraph{AMC-EHR.}
The dataset is derived from the de-identified STAnford medicine Research
data Repository Observational Medical Outcomes Partnership
(STARR-OMOP) database \citep{datta2020starr}, comprising Stanford Medicine
electronic health records represented in the OMOP Common Data Model (CDM).
The dataset contains longitudinal data from a large tertiary academic medical
center in the USA (outpatient visits, hospital admissions, emergency visits,
observation-stay visits, conditions, drugs, measurements, and mortality).
We included only patients with at least five events to construct a full
trajectory, yielding 529{,}125{,}023 events across 647{,}447 patients.
Splits are patient-level (15\% validation, 15\% test). The analysis used the de-identified STARR-OMOP database, which Stanford
designates as a pre-IRB, non-human-subject dataset accessible to researchers
without IRB approval.

\subsection{Event representation}
Every dataset is converted to one normalised event table
(patient, date, source, code, optional value). The flow and continuous-time
models consume a fixed per-event feature vector: a source one-hot, a code
one-hot over the 511 most frequent codes plus an unknown token, the
standardised value and a has-value flag, four standardised time columns (age,
time since registration, time since index diagnosis, days since the previous
event), two indicator flags (action event, terminal event) and a constant
bias column. Days are binned  so a patient trajectory is a sequence of day-states.

\subsection{\texttt{EHRFlow} implementation}
\label{app:ehrflow-impl}

\paragraph{Latent autoencoder.}
A variational autoencoder maps each day-state to a latent
$z\in\mathbb{R}^{d}$. Encoder and decoder each have one hidden layer with SiLU
activation and dropout 0.05; the encoder outputs mean and log-variance (clamped
to $[-10,10]$), the decoder emits source logits, code logits and the continuous
block. Reconstruction loss is cross-entropy on source and code plus
$0.1\times$ MSE on the continuous block; the KL term has weight $10^{-3}$. The
autoencoder is pretrained for 3 epochs with AdamW; the encoder is then frozen
and the latent mean is used at evaluation. The decoder keeps training during
the flow stage.

\paragraph{Vector field.}
The velocity network is an MLP with three hidden layers (SiLU, dropout 0.05)
over $(z_t,\, s,\, \log(1+\Delta t),\, c)$, where $s\in[0,1]$ is path time,
$\Delta t$ the calendar gap of the pair in days and $c$ the history context
(absent on the no-history arms). Parameters are as follows: hidden 488 / latent 64 on the synthetic datasets,
567 / 256 on CPRD, 614 / 64 on AMC-EHR.

\paragraph{History encoder.}
A single-layer GRU (hidden 128) reads the $K=32$ day-states ending at the
anchor (the anchor and its 31 predecessors) and a linear layer projects its
final state to a 32-dimensional context. The GRU is pretrained for 2 epochs on
cross-entropy and then frozen, so the flow stage receives
a fixed history summary.

\paragraph{Training.}
Pairs are anchor/target day-states from the same patient. Flow models sample 4 knots per path at least 2\% of the window apart. The GP path draws the interpolant between knots from a Gaussian process with a linear endpoint mean and an RBF kernel on the residual. The spline path (MMFM) is a natural cubic spline through the same knots with the bridge standard deviation of  \citet{rohbeck2025modeling},
$\sigma_t=\sigma_{\text{base}}\,4(t_{k+1}-t)(t-t_k)/\Delta_k^2$,
$\sigma_{\text{base}}=0.1$. The training loss is the MSE between the predicted and target path velocities (with the target detached), plus, with weight 1, the observation-space reconstruction loss of the decoded one-step latent endpoint estimate
$D_\phi\!\left(z_t+(1-t)\,v_\theta(z_t,t,\lambda,h_a)\right)$
against the final observation of the window.

\paragraph{Optimisation and inference.}
We used AdamW with learning rate $10^{-3}$, weight decay $10^{-6}$ and gradient-norm clip 1.0.
On the synthetic datasets, we used batch size 512 and 16 epochs; on CPRD and AMC-EHR, we used batch size 4{,}096 and 5 epochs. The epoch with the best validation loss is selected for all models. Trajectories are integrated with 16 Euler-midpoint steps in path time from the
anchor latent to the horizon and decoded to source and code logits. 

\subsection{Baseline implementations}
\label{app:baselines}
We compare against models that learn longitudinal patient trajectories and support forecasting across multiple horizons. Task-specific predictors for a fixed clinical endpoint may achieve strong endpoint accuracy, but do not provide intermediate patient states, arbitrary-horizon forecasts, or trajectory rollout, and therefore do not assess the same modelling objective. Clinical-code prediction nevertheless provides a common endpoint metric for comparing forecasting accuracy across the trajectory models considered here.

\paragraph{CFM, MMFM, GP-CFM and Metric FM.}
All flow baselines are the identical network of
Appendix~\ref{app:ehrflow-impl} without the history encoder; they differ only
in the loss and pair sampler. CFM (adjacent) pairs consecutive day-states with
a straight-line path \citep{lipman2023flow}. Metric FM learns a geodesic-path network (2 layers of width 32,
trained for 5 epochs and frozen) under a LAND diagonal metric
$1/(\sum_j w_j\,dx_j^2+\rho)^{\alpha}$ with $\rho=0.01$, $\alpha=1$ and
bandwidth set to the median pairwise latent distance \citep{kapusniak2024metric}.

\paragraph{Continuous-time latent-state baselines (GRU-ODE-Bayes, Neural CDE).}
Both consume the same day-state feature vectors as the flow models and share
an observation head: a categorical distribution over the source block, a
categorical over the code block (soft targets, since day-states are
mean-aggregated) and a diagonal Gaussian over the continuous columns.
GRU-ODE-Bayes \citep{DeBrouwer2019} integrates the full GRU-ODE
$\mathrm{d}h/\mathrm{d}t=(1-z(h))(g(h)-h)$ between observations with Euler
steps of 7 days and applies the GRU-Bayes update at each observation, with the
paper's pre-jump likelihood plus a $10^{-4}$-weighted post-jump term. The
Neural CDE \citep{Kidger2020} follows the linearly interpolated day-state path
(time as the first channel) with a fixed-step RK4 solver and scores the next
observation by extrapolating the state over the gap with the features held
fixed; the vector field is a 2-layer MLP of width 256. Both are trained
on the observation negative log-likelihood of 64-day-state windows. Training uses AdamW with the flow arms'
learning rate and weight decay, a cosine schedule, and 16 epochs.

\paragraph{Decoder-only.}
A causal Transformer over event tokens: token embedding plus a continuous-time
encoding of (age$/100$, $\log$ gap$/10$) through a two-layer MLP, added to the
token embedding; pre-norm encoder layers with GELU and $4\times$ feed-forward
width; a causal mask in which same-day tokens cannot attend to each other and the diagonal is always open.
Following Delphi-2M there is no separate time head: each token logit is the
log-rate of an independent exponential risk, the total rate is
$\exp\operatorname{logsumexp}$ over non-padding tokens, and the waiting-time
NLL is weighted 1.0 against the next-token cross-entropy \citep{Shmatko2025}. One
\texttt{<no\_event>} filler token is inserted every five years of follow-up
from the patient's first event; fillers are
trained on but never scored. Vocabulary caps are 32 sources / 512 events.
Widths: hidden 116 with 4 layers and 4 heads on the synthetic datasets (144 on
Synthea), 190 with 4 layers and 2 heads on CPRD, 172 with 4 layers on AMC-EHR.

\paragraph{Encoder--decoder.}
Source and code embeddings (source width $\max(8,\text{hidden}/4)$, code
width the remainder) are concatenated to the hidden width; the eight numeric
columns of the event feature vector pass through a two-layer MLP and are
added, followed by a learned positional embedding and layer normalisation \citep{Zhang2026Apollo}.
Transformer encoder layers process the window and a single learned query is
decoded by cross-attention into one state vector that feeds the next-event
head and a separate exponential log-rate head for the gap, trained with the
same waiting-time NLL with one filler token per five years as
above. Widths: hidden 106 with 2 encoder and 2 decoder layers and 2 heads on
the synthetic datasets (132 on Synthea), 176 on CPRD, 160 on AMC-EHR.

\paragraph{Endpoints evaluation.}
For each horizon $H\in\{90,180,365,730,1095\}$ days an anchor day $t$ is paired with the first day at or after $t+H$
carrying a label-eligible event, provided it lies within
$\min(0.25H, 180)$ days of $t+H$. On the synthetic and Synthea datasets every
event day of an evaluation patient is an anchor and pairs are capped at
250{,}000 per horizon with at least one pair per patient. On CPRD and AMC-EHR a
single landmark per patient is chosen once for all horizons: the first event
with at least three (CPRD) or 32 (AMC-EHR) prior events and, on CPRD, at least
180 days after the heart-failure index date, so that every horizon scores the
same anchors. We test each (\texttt{EHRFlow} variant, baseline) comparison with a one-sided stratified sign-flip permutation test ($H_1$: \texttt{EHRFlow} higher), datasets as strata so that each dataset carries equal weight regardless of its scale, the statistic being the mean over strata of the
mean paired difference. Table cells report mean $\pm$ one standard deviation over
seeds unless otherwise specified.

\subsection{Ablations implementation}
\label{app:ablations-impl}
All ablation arms use the synthetic recipe (16 epochs, hidden 488 / latent 64 /
3 hidden layers, MMFM sampler with 4 knots, 5 seeds) and differ from \texttt{EHRFlow}
in one factor:
\begin{itemize}[leftmargin=*,itemsep=0.4ex,parsep=-0.1ex,topsep=0.25ex]
\item \textbf{No history}: the same network without the history encoder
(plain GP-CFM or MMFM).
\item \textbf{\texttt{EHRFlow}, end-to-end GRU}: the GRU is not pretrained; it is trained
jointly with the vector field through the flow loss.
\item \textbf{\texttt{EHRFlow}, fully end-to-end}: the latent encoder, decoder, history
encoder, and vector field are all trained jointly during flow training, rather
than freezing the latent and history encoders as in the standard \texttt{EHRFlow}
training procedure.
\item \textbf{\texttt{EHRFlow}, zero context}: the GRU is instantiated, so the field input
width and parameter count are identical to \texttt{EHRFlow}, but its output is
set to zero in both training and evaluation.
\item \textbf{\texttt{EHRFlow} matched}: hidden width 481, so that the field MLP has the
reference parameter count despite the wider input.
\item \textbf{\texttt{EHRFlow}, no decoder loss}: the decoder weight is set to $\beta=0$,
so the flow stage minimises the flow-matching loss alone. The decoder receives
no gradient and, like the encoder, stays fixed at its weights from latent
pretraining; everything else is \texttt{EHRFlow}.
\item \textbf{History as source}: a variational history encoder maps the last
$K=32$ day-states to a diagonal-Gaussian prior $q_\phi(z_0\mid\mathcal{H}_a)$,
KL-regularised with the same weight as the latent VAE, and a sample from it
replaces the encoded anchor as the starting point of the flow.  The vector
field is the unconditioned GP-CFM / MMFM field.
\item \textbf{History as source + context}: the same prior is the source, and
the same history summary additionally conditions the field as in
\texttt{EHRFlow}. 
\item \textbf{CFM (adjacent)}: conditional flow matching with the linear
interpolant on pairs of consecutive observations, with no multi-observation path
construction.
\item \textbf{CFM (adjacent) + history}: the same model conditioned on the
frozen, pretrained history representation $h_a$, exactly as in \texttt{EHRFlow}.
\item \textbf{CFM (MMFM windows) + history}: the same history-conditioned
model trained on the anchor--target pairs used by MMFM, but with a linear
interpolant that ignores intermediate observations. This is closely related
to the history-conditioned pairwise formulation of trajectory flow matching
\citep{zhang2024trajectory}, but omits its uncertainty-prediction component,
whose interpretation in observation space would require propagating latent
uncertainty through the decoder.

\end{itemize}

\subsection{Treatment-effect experiment}
\label{app:causal}

\paragraph{Cohort.}
A Synthea population of 20{,}000 living patients aged 40--90 is generated
once with the stock modules (treatment arm) and once with antihypertensive
prescriptions suppressed (control arm), yielding 26{,}390 and 26{,}879
patient records including deceased patients (66.6M and 64.0M events).
Because Synthea prescribes antihypertensives to every hypertensive patient,
we construct the control arm by suppressing prescriptions, obtaining
paired simulated trajectories under both treatment conditions.

\paragraph{Estimand and evaluation designs.}
Treatment is defined as the first antihypertensive prescription.
The target outcome is the last systolic blood-pressure reading within
1{,}825 days of treatment initiation, requiring at least two outcome
readings. The simulator-defined ground-truth effect is $-20.48$~mmHg
(sd 14.45; $n=19{,}618$ paired patients).
We evaluate two designs: \emph{Paired}, in which both treatment conditions
are available for each patient and effects are averaged over treated
patients; and \emph{Split-arm}, in which each patient contributes only
one treatment condition, assigned by a hash of the patient identifier.

\paragraph{Guided \texttt{EHRFlow}.}
The treatment arm enters the vector field through a 32-dimensional
embedding, replaced by a learned null token with probability 0.1 during
training for classifier-free guidance \citep{Ho2022}.
At inference, the guided field
$v_\varnothing + w\,(v_a - v_\varnothing)$ is integrated with 16 steps.
The guidance weight is selected from 14 values in $[0.5, 3.0]$ by
minimising validation mean squared error between the rolled-out outcome
and each patient's observed factual reading, without access to the
counterfactual ground truth.

\subsection{Computational resources}
\label{app:compute}

All main experiments, including the synthetic benchmarks, ablations, and treatment-effect experiments, were run on a machine with a single NVIDIA RTX 6000 Ada GPU (48~GB) and two AMD EPYC 7713 64-core processors, inside a container limited to 32 CPU threads and a 120~GiB memory cgroup (Ubuntu 22.04, Python 3.13, PyTorch 2.7.1, CUDA 12.6). Table~\ref{tab:compute} reports wall-clock time per run for the synthetic datasets (training and evaluation). The CPRD and AMC-EHR experiments were run on separate specialised compute environments that complied with the respective data-access and security requirements of the data providers. Unlike the synthetic datasets, these datasets did not fit entirely in memory, so their runtimes were substantially influenced by I/O bottlenecks, with individual runs taking several days. All synthetic experiments, including Synthea and the ablations, were repeated with five random seeds; CPRD and AMC-EHR experiments used three and five seeds, respectively, and treatment-effect recovery experiments used ten seeds.

\begin{table}[t]
\vspace{-12pt}
\centering
\caption{Wall-clock time in hours for each of the synthetic datasets (mean
$\pm$ std over seeds).}
\label{tab:compute}
\resizebox{\textwidth}{!}{%
\begin{tabular}{lcccccc}
\toprule
Wall-clock (h) & Anthracycline & RA on DMARD & CKD & RRMS & Heart failure & Synthea \\
\midrule
Decoder-only & 56.94 $\pm$ 26.02 & 47.02 $\pm$ 20.78 & 51.38 $\pm$ 26.94 & 52.84 $\pm$ 24.04 & 50.75 $\pm$ 22.82 & 23.36 $\pm$ 11.54 \\
Encoder-decoder & 11.41 $\pm$ 4.18 & 12.81 $\pm$ 4.68 & 13.82 $\pm$ 4.66 & 10.02 $\pm$ 4.52 & 15.24 $\pm$ 5.64 & 14.34 $\pm$ 4.52 \\
GRU-ODE-Bayes & 15.95 $\pm$ 2.70 & 9.78 $\pm$ 1.12 & 12.40 $\pm$ 1.53 & 12.86 $\pm$ 2.06 & 13.00 $\pm$ 1.76 & 13.78 $\pm$ 1.81 \\
Neural CDE & 4.32 $\pm$ 0.92 & 3.70 $\pm$ 0.39 & 4.37 $\pm$ 0.51 & 4.37 $\pm$ 0.48 & 4.39 $\pm$ 0.43 & 12.72 $\pm$ 3.78 \\
CFM (adjacent) & 1.15 $\pm$ 0.28 & 0.27 $\pm$ 0.00 & 0.29 $\pm$ 0.01 & 0.52 $\pm$ 0.05 & 0.30 $\pm$ 0.01 & 1.57 $\pm$ 1.92 \\
Metric FM & 1.31 $\pm$ 0.20 & 0.87 $\pm$ 0.19 & 1.13 $\pm$ 0.05 & 0.57 $\pm$ 0.09 & 1.11 $\pm$ 0.15 & 0.63 $\pm$ 0.02 \\
GP-CFM & 0.88 $\pm$ 0.39 & 0.48 $\pm$ 0.03 & 0.51 $\pm$ 0.03 & 0.80 $\pm$ 0.32 & 0.88 $\pm$ 0.42 & 0.73 $\pm$ 0.03 \\
MMFM & 0.55 $\pm$ 0.40 & 0.21 $\pm$ 0.02 & 0.22 $\pm$ 0.04 & 0.54 $\pm$ 0.34 & 0.63 $\pm$ 0.42 & 0.45 $\pm$ 0.03 \\
\midrule
\texttt{EHRFlow} (GP-CFM) & 0.66 $\pm$ 0.24 & 0.40 $\pm$ 0.01 & 0.43 $\pm$ 0.02 & 0.61 $\pm$ 0.19 & 0.68 $\pm$ 0.28 & 0.76 $\pm$ 0.02 \\
\texttt{EHRFlow} (MMFM) & 0.36 $\pm$ 0.20 & 0.18 $\pm$ 0.02 & 0.19 $\pm$ 0.04 & 0.35 $\pm$ 0.17 & 0.41 $\pm$ 0.29 & 0.50 $\pm$ 0.01 \\
\bottomrule
\end{tabular}}
\vspace{-12pt}
\end{table}

\section{AI use statement}
We used large language models (LLMs) to assist with improving the clarity of the manuscript, writing code for the experiments and refining the formatting of tables and figures. LLMs were not used for research ideation or any other substantive contributions that would merit authorship. The authors take responsibility for the final content of this work.

\end{document}